%% file: arxiv.tex
\documentclass{article} 
\let\ORIGaddcontentsline\addcontentsline
\usepackage{iclr2027_conference,times}
\let\addcontentsline\ORIGaddcontentsline

\usepackage{amssymb,amsmath,amsthm}
\usepackage{mathrsfs}
\usepackage{algorithm}
\usepackage{algpseudocode}
\usepackage{graphicx}
\usepackage{float}
\usepackage{subcaption}
\usepackage{wrapfig2}
\usepackage{svg}
\usepackage{booktabs}
\usepackage{multirow}
\usepackage[table,dvipsnames]{xcolor}
\usepackage{titletoc}
\usepackage{hyperref}
\usepackage{enumitem}
\setlist[itemize]{leftmargin=15pt, noitemsep, topsep=0pt, partopsep=0pt, parsep=0pt}
\usepackage[most]{tcolorbox}

\usepackage{pifont}
\usepackage{listings}
\lstdefinelanguage{systemprompt}{
    basicstyle=\ttfamily\scriptsize,
    breaklines=true,
    breakatwhitespace=true,
    frame=none,
    backgroundcolor=\color{white!98!gray},
    breakindent=0pt,
}
\newcommand{\pinput}[1]{\textcolor{RoyalBlue}{\{#1\}}}

\makeatletter
\renewcommand\paragraph{%
  \@startsection{paragraph}{4}{\z@}%
    {0pt}%
    {-1em}%
    {\normalfont\normalsize\bfseries}%
}
\makeatother

\theoremstyle{definition}
\newtheorem{theorem}{Theorem}[section]
\newtheorem{proposition}[theorem]{Proposition}

\theoremstyle{definition}

\newtheorem*{remark}{Remark}

\definecolor{tomcolor}{RGB}{0, 100, 180} 
\definecolor{toscolor}{RGB}{180, 70, 20} 

\colorlet{ours}{blue!10}
\colorlet{bad}{red!10}
\newcommand{\std}[1]{\tiny$\pm$#1}

\newcommand{\cmark}{\textcolor{ForestGreen}{\ding{51}}}
\newcommand{\xmark}{\textcolor{red!75!black}{\ding{55}}}
\newcommand{\reading}[1]{\textcolor{toscolor}{\textbf{#1}}}

\title{
Theory of Scene: Breaking the Symmetry Trap in Multi-Agent LLM Coordination
}

\author{
Liangqi Yuan$^{1}$\thanks{Corresponding author: Liangqi Yuan (\texttt{liangqiy@purdue.edu})} \quad
Wenzhi Fang$^{1}$ \quad
Shiqiang Wang$^{2}$ \quad
Christopher G. Brinton$^{1}$\\
$^1$Purdue University \quad
$^2$University of Exeter\\
}

\iclrfinalcopy 
\begin{document}

\maketitle

\begin{abstract}
Multi-agent systems built on large language models (LLMs) are largely homogeneous, as their agents behave alike even across distinct LLMs. We show that when such agents act concurrently without communication, they collide on targets they must split and diverge on targets they must take together, a double failure we term the \emph{symmetry trap}. Theory of Mind (ToM), widely used for coordination without communication, cannot escape this trap, since homogeneous agents form the same prediction of one another and respond to it in the same way.
We propose \textbf{Theory of Scene (ToS)}, a training-free reasoning schema in which each agent reads its public role, the only difference between the agents, and the task context they all observe. Homogeneous agents thereby derive one division of labor, each taking the part its role fixes, which turns homogeneity from the cause of the trap into the cure. ToS reads the role together with the scene through \emph{role gating}, which determines whether ownership overlaps or is already divided, and the task context through \emph{task coupling}, which infers whether the team must converge on each target, divide it, or take its stages in turn.
We evaluate on DivvyBench, a controlled environment we introduce, whose target types make an episode Competitive, Cooperative, or Mixed across Tabletop, Airspace, and Household scenarios, and on two established agentic benchmarks, GovSim and Overcooked. ToS outperforms all six baselines on every benchmark, and each baseline falls far behind it in at least one setting. Against ToM given the same inputs, ToS raises the DivvyBench success rate from 71.1\% to 99.6\%, the GovSim total gain from 207 to 400, and the Overcooked level-normalized throughput from 1.41 to 1.67.
\end{abstract}

\begin{figure}[H]
\centering
\vspace{-20pt}
\includegraphics[width=\linewidth]{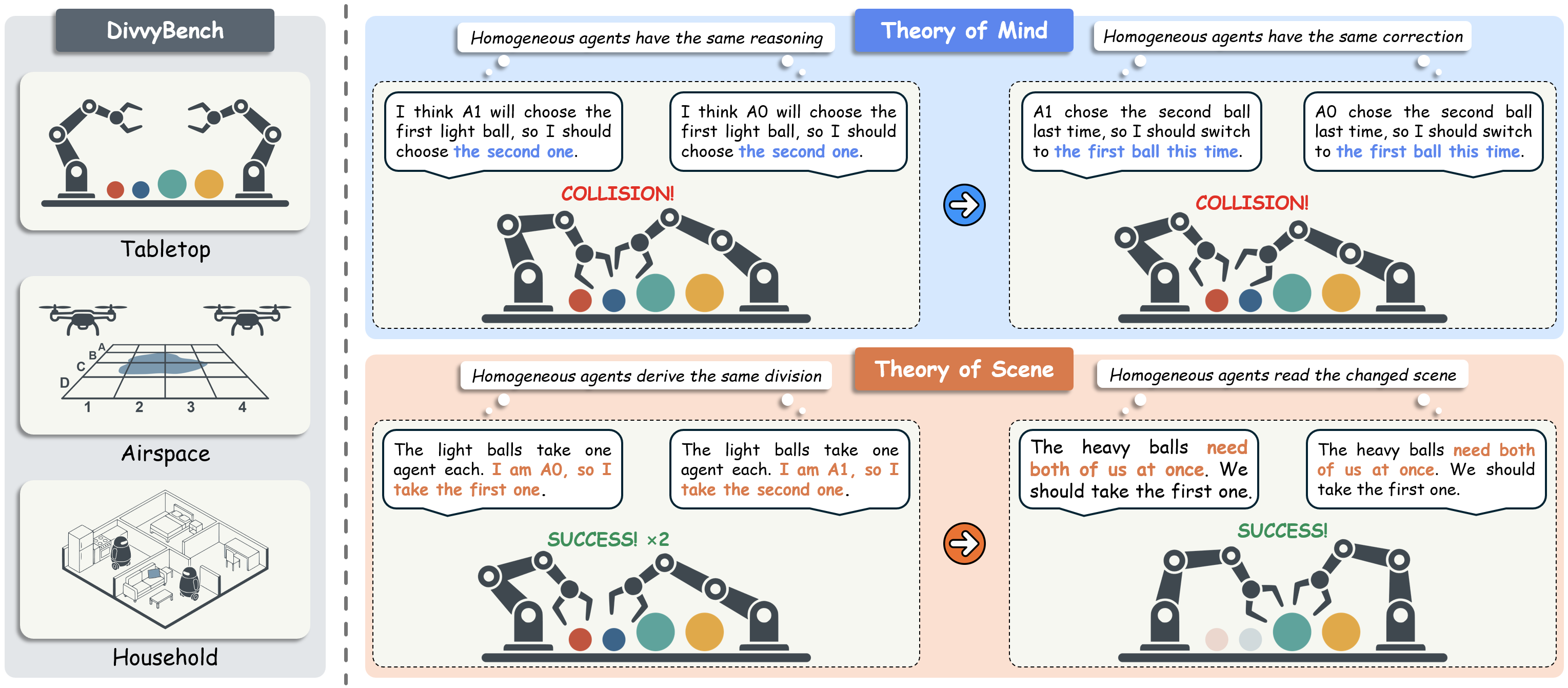}
\vspace{-15pt}
\caption{\textbf{Symmetry trap: ToM vs. ToS.} \textit{Left:} DivvyBench covers three scenarios, Tabletop picking, Airspace survey, and Household assistance. \textit{Top:} Under ToM, homogeneous agents reason identically at every step, so they collide, correct, and collide again. \textit{Bottom:} Under ToS, the agents read the division of labor from the shared scene and the public role, diverging onto the Competitive targets and converging on the Cooperative targets.}
\label{fig:intro}
\end{figure}

\section{Introduction}

Large language model (LLM) agents increasingly work in teams, typically several instances of one LLM~\citep{li2023camel, hong2024metagpt}. Coordination among such teams usually relies on (i) a means for the agents to communicate~\citep{chopra2025ripple, grotschla2025agentsnet, jian2026gated, yang2025agentnet} or (ii) a predefined division of labor among the agents~\citep{hong2024metagpt, qian2024chatdev}. Communication requires resources (e.g., wireless channels), which can be costly, forbidden by policy, and/or unreliable in multi-robot or multi-vehicle teams. A predefined division is realized by prompts that assign each agent a fixed responsibility, such as planner, executor, or verifier, yet the right division changes with the scene at runtime. Lacking both, the agents must coordinate unaided on tasks that often mix Competitive targets, which they must divide, with Cooperative targets, which require several of them at once~\citep{carroll2019utility, agapiou2022melting, piatti2024cooperate}. Yet such agents tend to be homogeneous, behaving alike because even distinct LLMs produce strikingly similar outputs~\citep{jiang2025artificial, kim2025correlated, goel2025great}. They therefore collide on Competitive targets, and any mechanism that separates them there also splits them across Cooperative targets. We identify this double failure as \emph{the symmetry trap, which arises when homogeneous agents act concurrently without communication and are distinguished only by a public role} (Figure~\ref{fig:intro}, top).

Escaping the symmetry trap therefore requires each agent to derive a division of labor by itself, such that all agents arrive at the same division. A public role alone identifies each agent but divides no work, and raising the sampling temperature separates the agents at random, which leaves no division shared. Theory of Mind (ToM), a common approach to multi-agent coordination, has each agent predict the others and best-respond to the prediction~\citep{agashe2025llm, cross2025hypothetical, zhang2024proagent}, yet a homogeneous agent that predicts its teammates recovers only its own reasoning. Conventions and index-based assignments determine which agent takes which target once the team knows it must separate~\citep{schelling1980strategy, ashery2025emergent}, but do not infer from the scene whether a target requires divergence or convergence. Multi-agent reinforcement learning (MARL) breaks symmetry by training over the symmetry group~\citep{hu2020other}, which is unavailable to LLM agents that coordinate at inference. We therefore ask the following question. \emph{How can homogeneous LLM agents escape the symmetry trap by deriving the same division of labor from one shared scene, one that diverges, converges, or takes turns from one target to the next?}

To answer this question, we propose \textbf{Theory of Scene (ToS)}, a training-free reasoning schema that derives the division of labor from the observable scene, whereas ToM predicts the other agents' hidden intent. The intuition is that a scene admits one or more rational divisions of labor, and that any agent can derive one of them from the scene alone. ToS reads the two signals the setting supplies: (i) the \emph{public role}, the only signal that distinguishes homogeneous agents, and (ii) the \emph{task context}, which specifies the work the team must complete. The shared scene thus leads homogeneous agents to the same division, and the public role assigns each agent its own part, so the team diverges on Competitive targets and converges on Cooperative ones without exchanging any message (Figure~\ref{fig:intro}, bottom). Because the interaction is agentic, a disagreement at one step is resolved over the steps that follow. Our contributions are summarized as follows.

\begin{itemize}
\item \textbf{The Symmetry Trap and DivvyBench.} We characterize the \emph{symmetry trap}, in which homogeneous agents collide where the targets must be divided, while the randomness or prediction that separates them also separates them where they must converge. To isolate the trap, we build DivvyBench, a controlled environment that reduces each step to the choice of a target and composes each episode of one target type or a mix of both, across the Tabletop, Airspace, and Household scenarios.
\item \textbf{Theory of Scene.} We propose ToS reasoning, in which each agent reads the shared scene through two mechanisms. (i) \emph{Role gating} settles from the scene and the public role whether ownership is overlapping or already divided, and derives a division only where it is overlapping. (ii) \emph{Task coupling} infers what each target requires of the team, and the team accordingly converges on it, divides it, or takes its stages in turn, which determines the division of labor.
\item \textbf{Empirical Validation.} Three agentic benchmarks, our DivvyBench, GovSim, and Overcooked, show that ToS performs best throughout, while the strongest of the six baselines, each fixed to one reasoning schema, varies across benchmarks. Incorporating ToS's readings into ToM improves ToM yet leaves it below ToS, because ToM derives its action from its prediction, which these readings leave unconstrained.
\end{itemize}

\section{Related Work}

\paragraph{Multi-Agent Coordination: Competitive vs.\ Cooperative.}
Prior work on multi-agent LLM coordination studies competition and cooperation as separate problems and assumes a communication channel. \emph{Competitive} interaction, where the agents contend for the same outcome, is approached through negotiation, debate, and strategic games~\citep{du2024improving, duan2024gtbench}. \emph{Cooperative} interaction, where the agents share a goal, is approached through role-based frameworks that decompose a task and negotiate a plan~\citep{qian2024chatdev, wu2024autogen} and through joint-task benchmarks~\citep{sun2025collab, qian2026collabbench, kim2026teambench}. Each line of work fixes a single setting, and even evaluations spanning both treat them as separate games, so an agent never has to adapt its coordination strategy within an episode. From embodied cooperative suites~\citep{zhang2024building} and open-ended survival worlds~\citep{tessera2026benchmarking} to language games~\citep{li2025systematic, abdelnabi2024cooperation, lupu2025decrypto}, coordination benchmarks provide this channel, through which the agents negotiate the division of labor. Mixed-motive suites in MARL place both settings in one episode~\citep{papoudakis2021benchmarking, agapiou2022melting}, which the agents learn in training. We introduce DivvyBench, the first environment in which (i) the agents are homogeneous and separated only by a public role, (ii) they act simultaneously, before seeing the others' actions, (iii) they have no communication channel, and (iv) each target carries its own demand, so the same agents must diverge and converge within one episode.

\paragraph{Non-Communicating Coordination: ToM and Its Alternatives.}
Coordination without a channel has been approached in three ways, each resting on something this setting removes. The first predicts the other agents and best-responds, as in MARL and Bayesian inverse planning~\citep{wang2022tom2c, wu2021too}. LLM agents with ToM infer the beliefs and intentions behind the other agents' behavior~\citep{li2023theory, mu2026adaptive}, as ProAgent does by predicting a teammate's action~\citep{zhang2024proagent} and HypMinds by testing hypotheses about other agents~\citep{cross2025hypothetical}, both built for heterogeneous teammates. \citet{hayler2026zero} tell the agents their partner is an identical copy and still find no convention settled, and they add a self-check that only rejects flippable conventions. The second is conventions and focal points, which coordinate on a salient option under a commonly known rule fixed before the scene is read~\citep{schelling1980strategy, ashery2025emergent, aharon2026tacit}. The third resolves symmetry with MARL techniques, randomizing over the symmetry group during training~\citep{hu2020other}, applying off-belief learning~\citep{hu2021off}, or sampling a rank ordering at execution~\citep{patil2026randomness}. We therefore base ToS on the two signals that the setting provides, the public role and the task context, which removes the need for (i) predicting the other agents, (ii) a division of labor fixed before the scene is read, and (iii) training on the task or randomness at execution.

\section{The Symmetry Trap}
\label{sec:background}

\paragraph{The DivvyBench Environment.}
DivvyBench distills existing coordination benchmarks to the one decision their LLM agents make, the choice of a target, and discards the simulator that executes it. In C-WAH~\citep{zhang2024building} the LLM chooses which room to search and in Overcooked~\citep{carroll2019utility, sun2025collab} which station to use, and the surrounding simulator confounds the outcome, since a team that underperforms may have failed to coordinate or may simply have been overwhelmed by the scene. In DivvyBench, homogeneous agents must take every target in a shared scene (Appendix~\ref{appendix:divvybench}), acting at the same time without any message and deciding only which target to select. Scoring is all-or-nothing, so an episode counts only when every target is taken within the budget, and each collision or stuck step spends part of it. The same rules run in three scenarios, in which the agents collect colored balls in \emph{Tabletop}, survey grid areas in \emph{Airspace}, and search indoor locations in \emph{Household}, and we say that an agent takes a target when it completes that operation. A Competitive target is taken only when exactly one agent selects it, so agents that select the same one collide and take nothing, and a team that diverges completes the task in half the steps one agent needs alone. A Cooperative target is taken only when every agent selects it in the same step, so agents that split across targets take nothing. A scene that holds both types is Mixed, where the two demands fall on the same agents within one episode. The rest of this section runs on Tabletop, with every agent an instance of one LLM under one prompt, and holds the two pure settings apart, since a mixed scene superposes them and leaves each failure unattributable.

\paragraph{The Public Role and Temperature Cannot Break Competitive Symmetry.}
The first candidate for breaking the symmetry is the public role alone, and Figure~\ref{fig:divvybench_motivation} (left) shows that w/o Role (identical prompts) and w/ Role (a fixed public role, Leader or Follower) both collapse in the Competitive setting, where neither completes any episode at temperature $0$. A label such as Leader identifies an agent without specifying which targets belong to it, so every agent reasons from the same observation to the same salient target and, after each collision, revises its choice in the same way and selects the same target again. The second candidate, a higher temperature, separates the agents on some steps through independent sampling, yet agents sampling from one shared distribution collide with positive probability at every step, and each collision consumes a step, so the separation never becomes reliable. Both candidates therefore exhibit the symmetry trap, since without randomness the agents converge but cannot diverge, and the randomness that separates them on some steps also lowers their Cooperative success (Figure~\ref{fig:divvybench_motivation}, right), where they must converge.

\begin{wrapfigure}{r}{0.65\textwidth}
\centering
\vspace{-12pt}
\includegraphics[width=\linewidth]{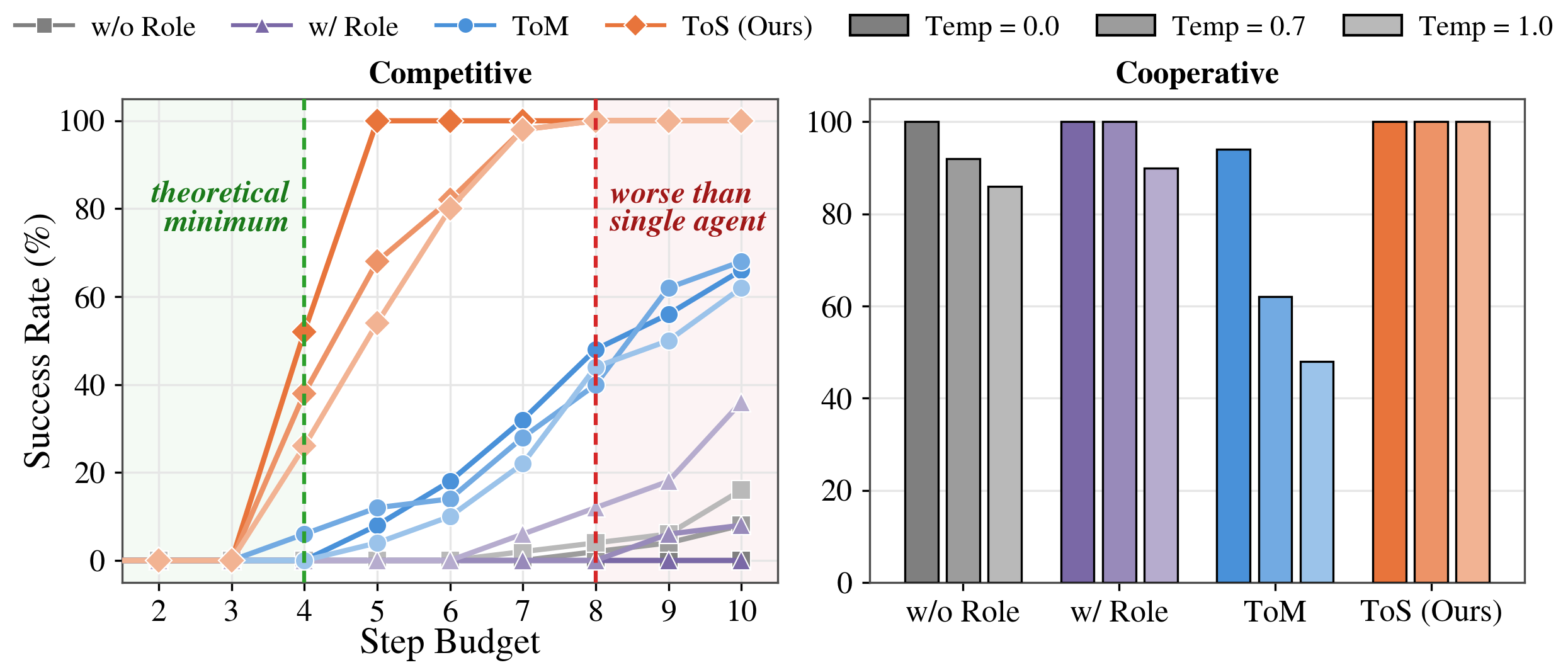}
\vspace{-18pt}
\caption{\textbf{Each baseline succeeds in one setting and fails in the other.} DivvyBench Tabletop with 2 agents, 8 balls, and a 10-step budget. \textit{Left:} Competitive success rate against the step budget. \textit{Right:} Cooperative success rate at the full budget.}
\vspace{-5pt}
\label{fig:divvybench_motivation}
\end{wrapfigure}

\paragraph{ToM Fails under Cooperative Symmetry.}
The third candidate is ToM, in which each agent predicts the others and best-responds to the prediction. ToM raises success in the Competitive setting, since an agent that predicts which targets the others will choose selects a different one, and even an imperfect prediction allows the team to progress as long as the agents separate. The Cooperative setting imposes a stricter requirement, since the agents must independently select exactly the same target. A homogeneous agent that predicts its teammate simulates a copy of itself, so the prediction returns the reasoning the agent already had and uses no external distinction, without which homogeneous agents cannot break this symmetry~\citep{angluin1980local, riedl2026emergent}. Their predictions therefore agree when one target is clearly salient and differ when several are equally salient, since each agent resolves the ambiguity privately through independent decoding, and the agents then select different targets and make no progress, increasingly so as the temperature rises (Figure~\ref{fig:divvybench_motivation}, right). ToM predicts the teammate's target with more than 90\% accuracy in the Cooperative setting, where agents succeed even without prediction, but with only about 30\% accuracy in the Competitive setting, where the agents must divide the targets (Appendix~\ref{appendix:failure_tom}). The prediction is thus accurate where it is least needed and inaccurate where it is needed most, so ToM also exhibits the symmetry trap.

\section{Theory of Scene}
\label{sec:method}

\paragraph{Setup.} We consider $N$ homogeneous agents that coordinate over a horizon of $T$ steps without communication. All agents instantiate the same policy $\pi_\theta$ under a shared reasoning schema $R$, distinguished only by a public role $r_i$. At each step $t=1,\dots,T$, agent $i$ has the environment observation $o_i^t$ (the task context, the current scene, and its legal actions) and the other agents' past actions $\mathbf{a}_{-i}^{<t}$, where $-i$ denotes the agents other than $i$, and produces one action through a single policy call:
\begin{equation}
a_i^t \sim \pi_\theta(\cdot\mid o_i^t,\,r_i,\,\mathbf{a}_{-i}^{<t},\,R)
\quad \text{for all } i=1,\dots,N \text{ concurrently}.
\label{eq:policy}
\end{equation}
Each agent acts without observing the others' actions at the same step, so $\mathbf{a}_{-i}^{<t}$ runs only through step $t-1$. The environment then executes the joint action and returns the next observations.

\subsection{Theory of Mind versus Theory of Scene}
\label{sec:method:compare}

\paragraph{Two Reasoning Schemas.} A coordination method is a choice of reasoning schema $R$, the ordered fields a single $\pi_\theta$ call produces autoregressively, each conditioned on the observation, the role, and the earlier fields it reads. ToM and ToS share the \texttt{state} and \texttt{action} fields and differ in the two middle fields that carry the coordination strategy, and \texttt{action} acts on what those fields produce:
\begin{itemize}
\item \textbf{ToM} (Algorithm~\ref{alg:tom}) corrects a belief about the other agents, predicts their next actions, and best-responds,
\begin{equation}
\underbrace{b_i^{t}\sim\pi_\theta\!\big(\cdot\mid o_i^{t}, r_i, (\mathbf{a}_{-i}^{<t}, \hat{\mathbf{a}}_{-i}^{<t})\big)}_{\texttt{belief}},
\quad
\underbrace{\hat{\mathbf{a}}_{-i}^{t}\sim\pi_\theta\!\big(\cdot\mid o_i^{t}, r_i, b_i^{t}\big)}_{\texttt{predict}},
\quad
\underbrace{a_i^{t}\sim\pi_\theta\!\big(\cdot\mid o_i^{t}, r_i, \hat{\mathbf{a}}_{-i}^{t}\big)}_{\texttt{action}},
\label{eq:tom}
\end{equation}
where $\hat{\mathbf{a}}_{-i}^{t}$ denotes agent $i$'s prediction of the other agents' actions at step $t$ and $\hat{\mathbf{a}}_{-i}^{<t}$ its predictions from the earlier steps, and the belief $b_i^{t}$ is revised by comparing each past action in $\mathbf{a}_{-i}^{<t}$ with the prediction made for it, the correction that defines ToM.
\item \textbf{ToS} (Algorithm~\ref{alg:tos}) reads the present scene and acts on what it reads,
\begin{equation}
\underbrace{\rho_i^{t}\sim\pi_\theta\!\big(\cdot\mid o_i^{t},r_i\big)}_{\texttt{role}},
\quad
\underbrace{\tau_i^{t}\sim\pi_\theta\!\big(\cdot\mid o_i^{t},r_i\big)}_{\texttt{task}},
\quad
\underbrace{a_i^{t}\sim\pi_\theta\!\big(\cdot\mid o_i^{t},r_i,\rho_i^{t},\tau_i^{t}\big)}_{\texttt{action}},
\label{eq:tos}
\end{equation}
where the role reading $\rho_i^{t}$ and task reading $\tau_i^{t}$ are inferred independently from $o_i^{t}$ and $r_i$, and swapping the order in which they are emitted does not significantly lower performance (Table~\ref{tab:ablation}).
\end{itemize}

\begin{figure}[h]
\vspace{-4mm}
\centering
\begin{minipage}[t]{0.49\textwidth}
\begin{algorithm}[H]
\caption{Theory of Mind}
\label{alg:tom}
\small
\begin{algorithmic}[1]
\Require observation $o$,
\Statex \hspace{21pt} \textcolor{tomcolor}{the other agents' past actions $\mathbf{a}_{-i}^{<t}$}
\State \texttt{state}: ground in the observation $o$
\State \textcolor{tomcolor}{\texttt{belief}: correct from predicted vs.\ actual actions}
\State \textcolor{tomcolor}{\texttt{predict}: predict the others' next actions}
\State \texttt{action}: best-respond to \textcolor{tomcolor}{the predicted actions}
\end{algorithmic}
\end{algorithm}
\end{minipage}
\hfill
\begin{minipage}[t]{0.49\textwidth}
\begin{algorithm}[H]
\caption{Theory of Scene (Ours)}
\label{alg:tos}
\small
\begin{algorithmic}[1]
\Require observation $o$
\Statex
\State \texttt{state}: ground in the observation $o$
\State \textcolor{toscolor}{\texttt{role}: infer overlapping vs.\ divided ownership}
\State \textcolor{toscolor}{\texttt{task}: infer converge vs.\ diverge coordination}
\State \texttt{action}: derive \textcolor{toscolor}{a division of labor}
\end{algorithmic}
\end{algorithm}
\end{minipage}
\end{figure}

\begin{remark}[ToS Reasoning without the Other Agents' Actions]
Unlike ToM, whose belief and prediction are read off the other agents' past actions $\mathbf{a}_{-i}^{<t}$, ToS reads only the observation $o_i^{t}$ and its role $r_i$. The observation holds the task context and the global world state, which targets remain, which stations are occupied, and whether the last step took anything, and it attributes nothing to any agent. ToS reads its division from that state, corrects no belief about any agent, and therefore applies where the other agents cannot be observed. We hand both schemas the same inputs so that $R$ is the only variable in the comparison, and withholding $\mathbf{a}_{-i}^{<t}$ does not significantly lower ToS (Table~\ref{tab:ablation}).
\end{remark}

\subsection{Role Gating and Task Coupling}
\label{sec:method:tos}

\paragraph{Role Gating $\rho_i^{t}$.} Role gating turns the public role $r_i$ into a reading of the scene and takes two inputs. First, the observation $o_i^{t}$ carries the task context and the current scene together with the outcome of the agent's own actions. Second, the public role $r_i$ is the one signal that tells homogeneous agents apart, fixed before the episode and unchanged as the scene changes. Whether it is a specific assignment or only a name is a joint property of the role and the observation, so one label divides the work in one scene and divides none of it in another. The \texttt{role} field combines them into the role reading $\rho_i^{t}$, and the gate settles only whether a division has to be derived at all, while the task reading determines the division itself. The reading classifies the ownership of the work:
\begin{itemize}
\item \emph{Overlapping}, where the scene and the role name no agent's part and the agents work the same targets, the role distinguishing the agents without reserving work for any of them. The agent derives a division from the task reading and takes the part its role indexes, since agents that reason alike would otherwise fall into the symmetry trap.
\item \emph{Divided}, where the scene and the role already name each agent's part. The agent advances its own part, taking the single most useful action toward the goal and first producing any input a later step still lacks.
\end{itemize}

\paragraph{Task Coupling $\tau_i^{t}$.} Task coupling turns each target into a division that homogeneous agents derive the same way. The \texttt{task} field infers what each target requires of the team and records it as the task reading $\tau_i^{t}$. The schema names three couplings:
\begin{itemize}
\item \emph{Joint}, where the target's outcome requires more than one agent on it at once, which the team answers by converging on it.
\item \emph{Exclusive}, where the target is rival and what one agent takes is taken from the rest, which the team answers by dividing it into disjoint shares, each indexed by an agent's role $r_i$.
\item \emph{Sequential}, where the target depends on an input not yet in place, which the team answers by taking its stages in turn, each agent working the stage its role owns.
\end{itemize}

\paragraph{Combining $\rho_i^{t}$ and $\tau_i^{t}$ into the Action $a_i^{t}$.}
The last statement of Equation~\eqref{eq:tos} conditions the action on both readings, and in the \texttt{action} field the agent derives from them the division of labor for the whole team and acts on it at every step. A scene can admit more than one rational division, since its targets can be ordered by their position in the observation, by name, by difficulty, distance, or priority, or by any other property they carry. Homogeneous agents reason alike and therefore arrive at the same division more readily, whereas agents with different reasoning styles can derive different ones. Across the steps of an agentic interaction, the scene changes at every step and realigns those divisions until the agents agree on one. The one difference between the agents (i.e., the public role) then fixes which part of that division each takes, so every agent acts on its own part without exchanging a message.

\subsection{Analysis of the Coordination Probability}
\label{sec:method:analysis}

Consider a single step, in which a scene offers $K\ge N$ targets indexed by $k$ and every agent selects one of them. The policy $\pi_\theta$ of Equation~\eqref{eq:policy} induces for agent $i$ a distribution $p^{(i)}$ over the targets, with $p^{(i)}_k$ the probability of target $k$, and the $N$ agents sample from their distributions independently, since no agent sees another's action at the same step. Two probabilities summarize the step, the probability $P_{\mathrm{div}}$ that the $N$ agents select $N$ distinct targets, an event termed divergence, and the probability $P_{\mathrm{conv}}$ that all of them select the same target, termed convergence. A schema blind to $r_i$ puts the agents on one distribution whenever their remaining inputs coincide, a condition the first step of an episode satisfies. Two propositions bound these probabilities, one for that shared distribution and one for the general case, and Appendix~\ref{appendix:proofs} proves both.

\begin{proposition}[Bounds under One Shared Distribution]\label{prop:blind}
If all $N$ agents sample independently from one distribution $p$, a single step satisfies
\begin{equation}
\label{eq:step}
\begin{aligned}
&\textstyle P_{\mathrm{div}}(p) = N!\,e_N(p), \quad
&&\textstyle\max_{p}\,P_{\mathrm{div}}(p) = \prod_{n=1}^{N-1}\Big(1-\tfrac{n}{K}\Big)<1,\\
&\textstyle P_{\mathrm{conv}}(p) = \sum_{k=1}^{K}p_k^{N}, \quad
&&\textstyle\max_{p}\,P_{\mathrm{conv}}(p) = 1,
\end{aligned}
\end{equation}
where $e_N(p)=\sum_{k_1<\cdots<k_N}p_{k_1}\cdots p_{k_N}$ is the elementary symmetric polynomial of degree $N$.
\end{proposition}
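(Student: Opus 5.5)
We handle the four claims in turn, starting with the two closed forms, which are direct counting. For divergence, each outcome in which the $N$ agents pick $N$ distinct targets is an injective map from agents to targets. Its probability is $\prod_i p_{k_i}$ by independence and the shared distribution. Grouping injective maps by their image, an $N$-subset $\{k_1<\cdots<k_N\}$, each image is realized by exactly $N!$ bijections, all of probability $p_{k_1}\cdots p_{k_N}$. Summing over images gives $P_{\mathrm{div}}(p)=N!\,e_N(p)$. For convergence, the event is the disjoint union over $k$ of the events that all agents pick $k$, each of probability $p_k^N$, so $P_{\mathrm{conv}}(p)=\sum_k p_k^N$.

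The convergence maximum is immediate. We have $p_k^N\le p_k$ for $p_k\in[0,1]$, hence $\sum_k p_k^N\le 1$. Equality is attained by any point mass $p=\delta_k$, the deterministic (temperature-$0$) case.

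The divergence maximum is the main step. We maximize $e_N$ over the probability simplex. Since $e_N$ is symmetric, we would argue that the maximizer is the uniform distribution via a smoothing step. Fix all coordinates except $p_a,p_b$ and write $s=p_a+p_b$. Then
\[
e_N(p)=p_ap_b\,A+(p_a+p_b)\,B+C,
\]
where $A=e_{N-2}$, $B=e_{N-1}$ and $C=e_N$ are taken over the remaining $K-2$ coordinates and are all nonnegative. With $s$ fixed, only $p_ap_b$ varies, and it is maximized at $p_a=p_b=s/2$. Replacing $(p_a,p_b)$ by $(s/2,s/2)$ therefore never decreases $e_N$. Because the simplex is compact and $e_N$ is continuous, a maximizer exists. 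Among maximizers we take one minimizing $\sum_k p_k^2$. If two coordinates of this maximizer differed, equalizing them would keep it a maximizer and strictly lower $\sum_k p_k^2$, a contradiction. So this maximizer is uniform. (Schur-concavity of $e_N$ on the nonnegative orthant gives the same conclusion.)

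At $p_k=1/K$ the value is
\[
N!\binom{K}{N}K^{-N}=\frac{K(K-1)\cdots(K-N+1)}{K^N}=\prod_{n=1}^{N-1}\Big(1-\frac{n}{K}\Big).
\]
The strict inequality $<1$ holds for $N\ge 2$, since the factor $n=1$ equals $1-1/K<1$ and every factor is positive because $K\ge N$. We would note that the statement implicitly takes $N\ge2$, as for $N=1$ the product is empty and equals $1$. The only delicate point in the whole argument is this extremal step, specifically that equalizing one pair leaves the maximum unchanged. This follows from the bilinear decomposition above together with $A\ge0$.
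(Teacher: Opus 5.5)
Your proof is correct, and it matches the paper's except at the extremal step for divergence. The paper handles that step in one line with Maclaurin's inequality, $\bigl(e_N(p)/\binom{K}{N}\bigr)^{1/N}\le e_1(p)/K=1/K$. This bounds $e_N$ pointwise on the whole simplex, with no existence argument, and its equality case shows that for $N\ge2$ the uniform distribution is the only maximizer.

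You instead prove the needed special case from scratch. The two-coordinate decomposition $e_N=p_ap_bA+(p_a+p_b)B+C$ with $A\ge0$ shows that equalizing a pair never lowers $e_N$. Choosing, among the maximizers, one of least $\sum_k p_k^2$ then turns this weak monotonicity into a contradiction without ever needing $A>0$.

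Maclaurin buys brevity and uniqueness. Your smoothing route buys self-containment and exposes the mechanism: spreading probability between two targets never hurts divergence. It identifies only \emph{a} maximizer, but that is all the statement requires.

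Your extra remarks are also sound and slightly sharpen the paper. Its proof never actually checks $\prod_{n=1}^{N-1}(1-n/K)<1$. You are right that this strict inequality needs $N\ge2$, since for $N=1$ the product is empty and $P_{\mathrm{div}}\equiv1$.
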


\begin{remark}[Policy Independence]
Proposition~\ref{prop:blind} bounds every schema whose agents sample one distribution, which a schema blind to $r_i$ does whenever their inputs coincide. The bounds of Equation~\eqref{eq:step} depend on $N$ and $K$ alone, so they hold for every model backbone, every shared prompt, and every temperature at once, while a temperature sweep reaches finitely many policies. At $N=2$ and $K=8$, as in our DivvyBench experiments, converging is unconstrained, while diverging succeeds with probability at most $1-1/8=7/8$ in one step, since agents sampling one shared distribution select the same target with probability at least $1/8$. Compounded over the four steps of the theoretical minimum, the bound is $0.27$, even when the distribution is retuned as targets are taken.
\end{remark}

A schema that reads $r_i$ can give different agents different distributions, and we quantify the effect of that difference. Let $\delta_{ij}\in[0,1]$ denote the total variation distance between the distributions of agents $i$ and $j$. It satisfies $\delta_{ij}=0$ when the two distributions are identical and $\delta_{ij}=1$ when they share no target, and its complement $1-\delta_{ij}=\sum_k\min\big(p^{(i)}_k,p^{(j)}_k\big)$ is the overlap of the two distributions. Let $\delta_{\min}$ and $\delta_{\max}$ be the smallest and largest $\delta_{ij}$ over the pairs of agents.

\begin{proposition}[Bounds under Per-Agent Distributions]\label{prop:role}
If the $N$ agents sample independently, agent $i$ from its distribution $p^{(i)}$, then a single step satisfies
\begin{equation}
\label{eq:role}
\begin{aligned}
&P_{\mathrm{div}} \le 1-\tfrac{1}{K}\big(1-\delta_{\min}\big)^{2}, \quad
&&P_{\mathrm{div}} = 1 \text{ only if } \delta_{\min}=1,\\
&P_{\mathrm{conv}} \le 1-\delta_{\max}, \quad
&&P_{\mathrm{conv}} = 1 \text{ only if } \delta_{\max}=0.
\end{aligned}
\end{equation}
\end{proposition}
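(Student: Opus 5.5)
The plan is to reduce both bounds to a single pair of agents and then compare the collision probability of that pair with the overlap $1-\delta_{ij}$. For any pair $i\neq j$, the independence of sampling gives $\Pr[a_i=a_j]=\sum_k p^{(i)}_k p^{(j)}_k$. Divergence requires in particular that $a_i\neq a_j$, and convergence requires that $a_i=a_j$. Hence, for every pair,
\begin{equation*}
P_{\mathrm{div}} \le 1-\sum_{k} p^{(i)}_k p^{(j)}_k, \qquad P_{\mathrm{conv}} \le \sum_{k} p^{(i)}_k p^{(j)}_k .
\end{equation*}
For the divergence bound we choose the pair attaining $\delta_{\min}$, and for the convergence bound the pair attaining $\delta_{\max}$.

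\textbf{Divergence bound.} Write $m_k=\min\big(p^{(i)}_k,p^{(j)}_k\big)$, so that $\sum_k m_k = 1-\delta_{ij}$. Since both entries of the pair are at least $m_k$, we have $p^{(i)}_k p^{(j)}_k \ge m_k^2$. Cauchy--Schwarz over the $K$ targets then gives
\begin{equation*}
\sum_k m_k^2 \ge \tfrac{1}{K}\Big(\sum_k m_k\Big)^2 = \tfrac1K(1-\delta_{ij})^2 .
\end{equation*}
Substituting the pair attaining $\delta_{\min}$ yields $P_{\mathrm{div}}\le 1-\tfrac1K(1-\delta_{\min})^2$.

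\textbf{Convergence bound.} Since $\max\big(p^{(i)}_k,p^{(j)}_k\big)\le 1$, we have $p^{(i)}_k p^{(j)}_k \le m_k$. Summing over $k$ gives $\Pr[a_i=a_j]\le 1-\delta_{ij}$. Taking the pair attaining $\delta_{\max}$ yields $P_{\mathrm{conv}}\le 1-\delta_{\max}$.

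\textbf{Equality cases.} Both follow directly from the bounds. If $P_{\mathrm{div}}=1$, the divergence bound forces $(1-\delta_{\min})^2/K\le 0$, so $\delta_{\min}=1$. If $P_{\mathrm{conv}}=1$, the convergence bound forces $1\le 1-\delta_{\max}$, so $\delta_{\max}=0$.

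No step is a serious obstacle. The only point needing care is the choice of elementwise comparison. The lower bound on the collision probability must use $p^{(i)}_k p^{(j)}_k\ge m_k^2$ followed by Cauchy--Schwarz, which is the step that produces the factor $1/K$. The upper bound must instead use $p^{(i)}_k p^{(j)}_k\le m_k$. The reduction from $N$ agents to a single pair is what makes both directions work with the extreme pair distances $\delta_{\min}$ and $\delta_{\max}$. The result also recovers the shared-distribution case of Proposition~\ref{prop:blind} in the limit $\delta=0$: the divergence bound becomes $1-1/K$, which matches $\max_p P_{\mathrm{div}}(p)$ there at $N=2$.
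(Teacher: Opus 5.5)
Your proof is correct and follows the paper's argument: the same reduction to a single pair, the bound $p^{(i)}_k p^{(j)}_k \le \min\big(p^{(i)}_k,p^{(j)}_k\big)$ for convergence, and Cauchy--Schwarz combined with the elementwise minimum for divergence. The only difference is cosmetic and lies in the divergence bound. The paper applies Cauchy--Schwarz to $\sqrt{p^{(i)}_k p^{(j)}_k}$ and then uses $\sqrt{p^{(i)}_k p^{(j)}_k}\ge\min\big(p^{(i)}_k,p^{(j)}_k\big)$, while you first bound $p^{(i)}_k p^{(j)}_k\ge m_k^2$ and then apply Cauchy--Schwarz to the $m_k$; both reach the same $(1-\delta_{ij})^2/K$.
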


\begin{remark}[The Two Readings of ToS Adjust $\delta_{ij}$]
The distances enter the two bounds of Equation~\eqref{eq:role} with opposite signs. Raising $\delta_{\min}$ raises the bound on $P_{\mathrm{div}}$, since distributions that overlap less collide less, and raising $\delta_{\max}$ lowers the bound on $P_{\mathrm{conv}}$, since a pair selects the same target with probability at most its overlap $1-\delta_{ij}$. The two only-if conditions of Equation~\eqref{eq:role} therefore pull $\delta_{ij}$ to opposite ends. The demand is per target, since the pair selects target $k$ together with probability $p^{(i)}_k p^{(j)}_k$. Targets the team must divide require $p^{(i)}_k p^{(j)}_k=0$ and thus $\delta_{ij}=1$, and a target it must share requires the reverse, so the same pair needs $\delta_{ij}$ to change from one step to the next with the targets that step addresses. The two readings of Equation~\eqref{eq:tos} place each target in its group, with the task reading $\tau_i^{t}$ inferring from the shared scene which demand a target makes and the role reading $\rho_i^{t}$ settling whether the distributions differ at all. Appendix~\ref{appendix:delta} measures $\delta_{ij}$ under each method.
\end{remark}

\section{Experiments}
\label{sec:experiments}

\subsection{Experimental Setup}
\label{sec:exp:setup}

\paragraph{Environments and Metrics.} We evaluate on three agentic benchmarks, all run with the agents acting concurrently and without any inter-agent communication, and Appendix~\ref{appendix:setup} gives their rules and per-benchmark parameters in full.
\begin{itemize}
\item \textbf{DivvyBench:} 2 agents must take all 8 targets within 10 steps under the Competitive, Cooperative, and Mixed settings. We report success and stuck rate, the fraction of steps that take nothing, over 150 episodes per setting, 50 in each of the Tabletop, Airspace, and Household scenarios.
\item \textbf{GovSim}~\citep{piatti2024cooperate}: 5 agents privately harvest from one regenerating resource pool for at most 12 rounds, and the pool collapses once it falls below 5 units. We report survival and total gain over 50 games in each of the Fishery, Pasture, and Pollution scenarios.
\item \textbf{Overcooked}~\citep{carroll2019utility, sun2025collab}: 2 agents cook over 6 difficulty levels within a 50-step episode, sharing every single-use station. Success saturates, so we report throughput in dishes per episode over 15 episodes per level, alongside an average normalized to ReAct.
\end{itemize}

\paragraph{Baselines.} We group the baselines by the signal each uses to coordinate. \textbf{Prediction-free} baselines carry no model of the other agents, and comprise ReAct~\citep{yao2023react}, which reasons over the shared observation and takes the single most useful action, Focal~\citep{aharon2026tacit}, which applies the focal-point prompt unchanged and takes the option that stands out from the rest, and CFD~\citep{hayler2026zero}, which adds a self-check that rejects any action resting on an arbitrary, flippable convention. \textbf{ToM} baselines predict the other agents, and comprise Classic ToM (Algorithm~\ref{alg:tom}), which best-responds to its prediction of their next actions, ProAgent~\citep{zhang2024proagent}, which infers the intention behind their actions and plans a skill that complements it, and HypMinds~\citep{cross2025hypothetical}, which maintains and tests hypotheses about their hidden strategies.

\paragraph{Implementation.} All agents run one LLM backbone, Qwen3.5-35B-A3B~\citep{qwen3.5}, under a prompt that is identical at each step except for the single line naming the public role (e.g., Chef or Assistant in Overcooked, Leader or Follower in DivvyBench, and A0 to A4 in GovSim). Across methods only the reasoning schema changes, itself appended verbatim to every task prompt over the three benchmarks, and we decode at a fixed temperature of $0.7$. Each cell reports the mean over the trials above, with the spread ($\pm$) the standard error of that mean. Appendix~\ref{appendix:config} gives the rest of the decoding and serving configuration, and Appendix~\ref{appendix:prompt} the task prompt of each DivvyBench setting with the ToS schema appended.

\subsection{Results}
\label{sec:exp:main}

\begin{table}[t]
\centering
\caption{\textbf{ToS is the only method that succeeds on all three benchmarks, and it outperforms every baseline on almost every metric.} Best per column in bold, ToS shaded, and $\uparrow$ and $\downarrow$ mark the improving direction. Appendix~\ref{appendix:significance} reports a two-sided 95\% confidence interval and a significance test for every gap on the headline statistic of each benchmark.}
\vspace{-5pt}
\label{tab:main}
\resizebox{\columnwidth}{!}{\input{00_results/table_main.tex}}
\end{table}

\paragraph{Comparison across the Coordination Spectrum.}
Each baseline in Table~\ref{tab:main} collapses for one of three reasons. \emph{(i) Nothing to break the symmetry.} ReAct acts on the shared observation alone, so homogeneous agents commit to the same option. They select the same target on DivvyBench and the same station on Overcooked, where no agent prepares the input that station needs, and with five agents on a shared pool each harvests with no reason to hold back. \emph{(ii) A prediction that mirrors the predictor.} The ToM methods are the three strongest baselines in DivvyBench Competitive, ReAct alone exceeds all three in DivvyBench Cooperative, and none survives half the GovSim horizon on average. Predicting a homogeneous agent returns the predictor's own ambiguity among equally salient targets, which splits agents that must share a target. GovSim rewards each agent for what it takes from a resource the others harvest as well, so the prediction that they will harvest becomes a reason to harvest first, and the pool collapses. \emph{(iii) One rule for every situation.} Focal separates even where the task requires the agents to meet, because a salience criterion treats every target the same way, and CFD never commits, because it rejects any action resting on a convention that could have been flipped. Neither reads what each target requires of the team, so each fails wherever its fixed rule does not fit the target. ToS avoids all three failure modes by deriving the division of labor from the scene at each step, which breaks the symmetry without predicting the other agents.

\begin{figure}[t]
\centering
\includegraphics[width=\linewidth]{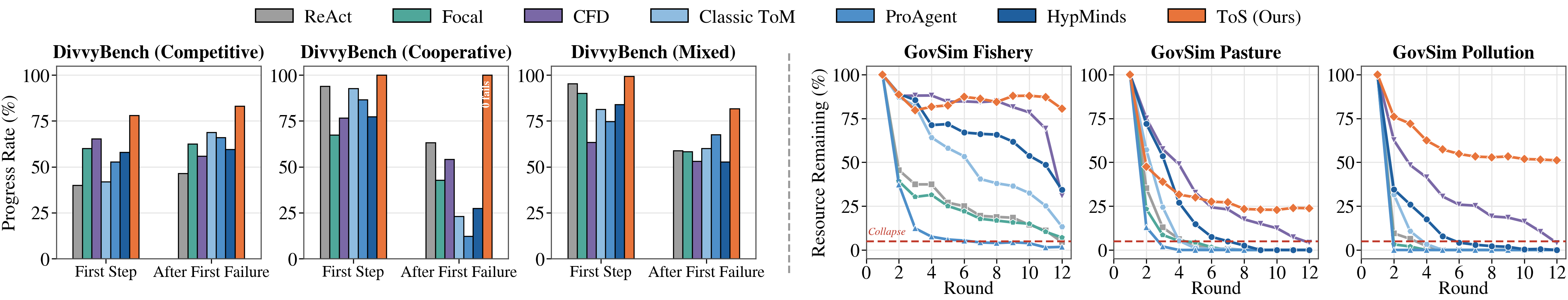}
\vspace{-12pt}
\caption{\textbf{ToS breaks symmetry on the first step, recovers after a failure, and sustains the resource pool.} \textit{Left:} On DivvyBench, the progress rate at the first step and at the step after the first failure, in each setting. The progress rate is defined as the fraction of episodes in which the team takes at least one target on that step. \textit{Right:} On GovSim, the resource remaining over the horizon.}
\vspace{-12pt}
\label{fig:recovery}
\end{figure}

\paragraph{First Step vs. Recovery after Failure.}
Figure~\ref{fig:recovery} reports the first step and the step after the first failure. At the first step no action has been taken, so every method holds the same observation and public role, and the reasoning schema is the only source of the lead ToS holds in every setting. No baseline uses the one asymmetry it is given, since ReAct acts on the observation, the ToM methods act on a prediction formed without seeing the other agents act, and Focal and CFD apply a rule fixed before the scene is read. At the step after the first failure, the methods differ only in what each infers from it. ToS leads at both steps, since a division derived from a scene that every agent reads alike fails only when an agent misreads the scene, whereas independent selections fail whenever they coincide. Over the 12 rounds of GovSim, the pool declines under every baseline, whereas ToS stabilizes it within a few rounds and ends each scenario with more resource than any baseline.

\begin{table}[t]
\centering
\caption{\textbf{Ablation Study and Robustness Study.} Removing role gating or task coupling lowers performance, while reordering the two readings or withholding the other agents' actions does not. Red shaded cells are significantly worse than ToS (one-sided two-sample $z$-test, $p<0.05$).}
\label{tab:ablation}
\vspace{-5pt}
\resizebox{\columnwidth}{!}{\input{00_results/table_ablation.tex}}
\vspace{-12pt}
\end{table}

\paragraph{Ablation Study.}
Table~\ref{tab:ablation} removes one reading at a time. Under \emph{w/o \texttt{task}}, the agents cannot settle whether to converge on the same target or diverge onto different ones, so all three DivvyBench settings degrade, and GovSim loses the reading that marks the pool as rival, so nothing tells an agent that what it takes is taken from the rest. Overcooked alone barely changes, since its recipe states the ordering outright. Under \emph{w/o \texttt{role}}, the pattern reverses and Overcooked drops, since it loses the reading that reports the work already divided, and the agents contend over stations the roles already assign. GovSim drops again by the wider margin, since role gating no longer reports the pool as contended, and the agents divide its full capacity, which exceeds the harvest the pool can sustain. By contrast, DivvyBench alone is unaffected, since the task coupling together with the public role (i.e., Leader or Follower) is enough to derive the division of labor. The task context and the public role settle a different part of the division on each benchmark, so either removal costs two of the three benchmarks, and only both readings together hold on all three.

\paragraph{Robustness Study.}
Table~\ref{tab:ablation} also reorders the two readings and withholds the other agents' actions, and neither perturbation produces a significant drop. Reordering them under \emph{\texttt{task}$\rightarrow$\texttt{role}} does not change the inputs of either reading, since Equation~\eqref{eq:tos} infers both from the observation $o_i^{t}$ and role $r_i$ alone. Withholding the other agents' actions under \emph{w/o $\mathbf{a}_{-i}^{<t}$} is the stronger test, since every ToM baseline depends on this input, and ToS therefore applies wherever those actions are unavailable.

\begin{wraptable}{r}{0.5\textwidth}
\centering
\vspace{-12pt}
\caption{\textbf{Classic ToM with ToS's \texttt{role} and \texttt{task} readings.}}
\label{tab:tom_tos}
\vspace{-5pt}
\resizebox{0.5\textwidth}{!}{\input{00_results/table_tom_tos.tex}}
\vspace{2pt}
\end{wraptable}

\paragraph{ToM + ToS's Readings.}
Table~\ref{tab:tom_tos} provides Classic ToM with the reading text of ToS, leaving the mechanism that derives the action as the only difference between the two schemas. ToS conditions the action on both readings (Algorithm~\ref{alg:tos}, Line 4), whereas each variant best-responds to its prediction (Algorithm~\ref{alg:tom}, Line 4), so that a reading reaches the action only through that prediction, with two consequences. \emph{(i) ToM with both readings still fails to converge.} Adding both readings raises GovSim survival only to 39.1 against 85.2 for ToS and lowers DivvyBench Cooperative success below that of Classic ToM alone, the two cases in which every agent must select the same action. A prediction is specific to each agent and revised from its own history, so actions derived from predictions coincide only when the predictions themselves do, whereas the readings are common to all agents. The \texttt{role} field alone lowers Classic ToM on DivvyBench and GovSim, since it specifies no action on which a best response can operate. \emph{(ii) Canonical order does not determine the action.} The \texttt{task} field provided to Classic ToM carries the same instruction as in ToS, to apply the rule of each coupling over a canonical order, so the variants derive that order in the same manner as ToS. Agents that derive the same order therefore select the same action under ToS, whereas under ToM they act on predictions that the order does not constrain and that need not coincide with it.

\paragraph{Further Analysis.}
Appendix~\ref{appendix:further_analysis} covers the public role (\ref{appendix:1bit}), observation asymmetry (\ref{appendix:observation_asymmetry}), the model backbone (\ref{appendix:impact_backbone}), heterogeneous models (\ref{appendix:heterogeneous}), heterogeneous reasoning schemas (\ref{appendix:het_reasoning}), the number of agents (\ref{appendix:impact_number_agents}), the interaction horizon (\ref{appendix:impact_horizon}), ToM's failure mode (\ref{appendix:failure_tom}), per-agent distributions (\ref{appendix:delta}), and confidence intervals and significance tests (\ref{appendix:significance}).

\section{Conclusion}

In this work, we identify the symmetry trap, in which homogeneous LLM agents acting concurrently without communication fail both to divide contested targets and to converge on shared ones. Existing approaches each handle only one side of the trap, since a public role or sampling randomness keeps the agents together where they must divide, whereas predicting one another separates them where they must converge. Theory of Scene (ToS) escapes the trap and remains effective in every setting of DivvyBench, GovSim, and Overcooked, without a predefined division of labor or training on the task. These results suggest that LLM agents coordinating without communication are better served by reasoning from the scene they share than by predicting one another.

\newpage

\bibliography{references}
\bibliographystyle{iclr2027_conference}

\newpage
\appendix
\begin{center}{\bf\Large Appendix}\end{center}
\startcontents[sections]
\printcontents[sections]{l}{1}{\setcounter{tocdepth}{3}}

\newpage
\section{Further Analysis}
\label{appendix:further_analysis}

\subsection{Impact of the Public Role}
\label{appendix:1bit}

\paragraph{Public Role.}
Table~\ref{tab:1bit} holds the ToS reasoning fixed and changes only the public role $r_i$ the agents are given, from an identical description through an index to the leader and follower labels ToS uses with two agents. At temperature $0$, where the wording of $r_i$ is the only thing that can separate the agents, identical descriptions leave ToS at 0.7\% in DivvyBench Competitive, the outcome classical symmetry breaking predicts for identical deterministic agents~\citep{angluin1980local}, so the trap survives an arbitrarily good reasoning schema once nothing distinguishes the agents. DivvyBench Cooperative is untouched at 100\% with no stuck steps, since converging on one target requires only agreement, and the distinction is therefore needed exactly where the agents must differ. An index already recovers most of the success in DivvyBench Competitive (92.7\%), and the leader and follower labels carry the same 1 bit yet close the remainder and cut the stuck rate in DivvyBench Mixed from 9.8\% to 6.4\%, so equal information does not yield equal coordination. Beyond two agents, every method uses index labels, and Appendix~\ref{appendix:impact_number_agents} still finds ToS near perfect at every team size.

\paragraph{Sampling Temperature.}
Table~\ref{tab:1bit} also sweeps the sampling temperature against each public role, the other candidate for breaking the symmetry~\citep{patil2026randomness}. Temperature leaves the identical description near the floor, at 11.3\% in DivvyBench Competitive at the highest temperature, partly compensates for a weak distinction, carrying the index from 92.7\% to 99.3\%, and adds nothing once the bit is a leader and follower label, where temperature $0$ already completes all three DivvyBench settings and gives the lowest stuck rate of all. Randomness therefore substitutes for the wording of a distinction and never for the distinction itself.

\begin{table}[H]
\centering
\caption{\textbf{One bit of distinction suffices to break the symmetry.} The ToS reasoning is held fixed on DivvyBench and only the public role differs. Under None the agents receive the same wording, under 1 Bit (index) they are told apart by ``A0'' and ``A1'', and under 1 Bit (role) by the leader and follower labels ToS uses, the last two carrying the same 1 bit. Red marks a cell significantly worse than 1 Bit (role) in the same setting at the same temperature (one-sided two-sample $z$-test, $p<0.05$).}
\label{tab:1bit}
\resizebox{\columnwidth}{!}{\input{00_results/table_1bit.tex}}
\end{table}

\subsection{Impact of Observation Asymmetry}
\label{appendix:observation_asymmetry}

\paragraph{Observation Asymmetry.}
Table~\ref{tab:impact_observation_asymmetry} holds the scene fixed and changes only the order in which each agent is shown it. An observation that differs is a third source of asymmetry between homogeneous agents, after the public role of Appendix~\ref{appendix:1bit} and the temperature of Section~\ref{sec:background}, and like the temperature it raises divergence and lowers convergence. Each baseline selects the most salient target of its own listing, and those listings no longer agree, so DivvyBench Competitive success rises. Converging instead requires the agents to select one target, and each selects the head of its own listing, which under a shared order had been the same target for all, so every baseline collapses in DivvyBench Cooperative. ReAct marks both extremes, rising in DivvyBench Competitive, where the agents must divide, and collapsing where they must meet, so the method that gains most from the asymmetry in one setting loses most in the other. DivvyBench Mixed puts both demands in one episode and leaves every baseline collapsed, so an asymmetry in what the agents observe resolves one side of the trap and never both.

\paragraph{ToS Derives Canonical Orders.}
Table~\ref{tab:impact_observation_asymmetry} also shows that ToS is the only method that succeeds in all three settings. A scene admits one or more canonical orders over its targets, from their names, from an index, or from any other property they carry, and ToS derives one and divides it by role. One derived order serves both demands, since converging takes its first element and diverging splits it among the roles, and the agents therefore reach the same division from listings that agree nowhere. Two candidate orders are available, the one a listing presents and the one the names induce, and only Tabletop separates them, since its colors are listed out of alphabetical order while the grid references and the location names are already sorted. Given one shared listing, the agents follow the order it presents in 93.8\% of their decisions, and under a reversed or shuffled listing they sort the names instead, in 61.7\% and 65.2\% of their decisions. Every agent can construct a sorted order for itself, whereas the presented order is no longer shared once the perturbation withholds it. The disagreements that remain fall mostly on the first step, where the agents hold the scene and nothing else, and the steps that follow realign them, since a wasted step is visible in the observation and the next action is sampled independently of the last.

\begin{table}[H]
\centering
\caption{\textbf{Only ToS succeeds in every setting under disagreeing observations.} Under Shared, the agents are shown the targets in the presentation order, as in Table~\ref{tab:main}. Under Reversed and Shuffled, A1 is shown them in reverse or in a fixed derangement of that order, with the targets and their types unchanged, and the rules state that the orders may differ. Red marks a setting whose Success is 20\% or below.}
\label{tab:impact_observation_asymmetry}
\resizebox{\columnwidth}{!}{\input{00_results/impact_observation_asymmetry.tex}}
\end{table}

\subsection{Impact of the Model Backbone}
\label{appendix:impact_backbone}

Table~\ref{tab:impact_backbone} rebuilds the agents on three further backbones, spanning three families from 9B to 26B, and repeats the three DivvyBench settings on each in every scenario. ToS is near-perfect on all four and holds the highest average on each, while baseline success swings sharply from one backbone to the next. The cause is that no baseline breaks symmetry explicitly, so whether homogeneous agents converge or diverge is set by the backbone, whatever the task requires. The prediction-free ReAct and CFD read that decision off the backbone's raw decoding tendency, and the prediction-based Classic ToM, ProAgent, and HypMinds reach the same outcome through prediction, since predicting an agent that reasons alike returns the predictor's own tendency. The ranking among the baselines changes with the backbone, and HypMinds ranks first on Gemma-4-26B and fifth on GPT-OSS-20B. ToS derives its decision from the scene, an input that does not vary with the backbone.

\begin{table}[H]
\centering
\caption{\textbf{ToS's advantage holds across model backbones.} Three further backbones (Gemma-4-26B, GPT-OSS-20B, and Qwen3.5-9B) stand in for the main Qwen3.5-35B-A3B across the three DivvyBench settings, with all agents in a run sharing one backbone.}
\label{tab:impact_backbone}
\resizebox{\columnwidth}{!}{\input{00_results/impact_backbone.tex}}
\end{table}

\subsection{Impact of Heterogeneous Models}
\label{appendix:heterogeneous}

Table~\ref{tab:impact_heterogeneous} gives the other agent a different backbone, the different-family Gemma-4-26B and the weaker same-family Qwen3.5-9B. Heterogeneity is the condition Theory of Mind is built for, since a prediction of an agent that reasons and decodes alike returns the predictor's own action, and a different backbone removes that. Every baseline gains against Gemma-4-26B, most of all CFD, which under homogeneity rejects every flippable convention until a second backbone breaks the deadlock for it. Classic ToM becomes the strongest baseline in both pairings, and it still trails ToS by 16.2 points against Gemma-4-26B and by 32.0 against Qwen3.5-9B. The weaker backbone separates the two mechanisms, since it breaks the symmetry just as freely and yet no baseline rises significantly above its homogeneous score, so a different agent improves ToM only when that agent is also a capable one. ToS's success is unchanged in both pairings, because the convention it commits to is read from the shared scene and the other agent contributes only its ability to read that scene and apply the same rule. Table~\ref{tab:crossplay} runs the same check on the reasoning schema, pairing each method with an agent that runs a different one on the same backbone.

\begin{table}[H]
\centering
\caption{\textbf{ToS's advantage holds when the agents run different model backbones.} The agents run the method named in each row on DivvyBench, A0 on the main Qwen3.5-35B-A3B backbone and A1 on the backbone named in each panel.}
\label{tab:impact_heterogeneous}
\resizebox{\columnwidth}{!}{\input{00_results/impact_heterogeneous.tex}}
\end{table}

\subsection{Impact of Heterogeneous Reasoning Schemas}
\label{appendix:het_reasoning}

Table~\ref{tab:crossplay} pairs each method with an agent running ReAct or Classic ToM. Every method coordinates with an agent that reasons differently, and every baseline scores above its homogeneous result in Table~\ref{tab:main} beside an unlike agent, least for Classic ToM. The gain comes from the pairing, since no method was changed and each was paired with the same two agents. Agents running unlike schemas are no longer homogeneous, so in DivvyBench Competitive the pairing supplies the distinction each baseline failed to derive. DivvyBench Cooperative reverses only for ReAct, the one baseline that converges reliably against a copy of itself, and its score there falls beside an unlike schema. The others converge less reliably under homogeneity, so an unlike partner does not lower their Cooperative success. Even with the symmetry broken for them, no baseline reaches ToS beside the same agent, and none reaches homogeneous ToS.

\begin{table}[H]
\centering
\caption{\textbf{ToS achieves the highest success in every setting beside an agent that reasons differently.} A0 runs the method named in each row, and A1 runs the reasoning schema named in each panel.}
\label{tab:crossplay}
\resizebox{\columnwidth}{!}{\input{00_results/crossplay.tex}}
\end{table}

\subsection{Impact of the Number of Agents}
\label{appendix:impact_number_agents}

\paragraph{Scaling the DivvyBench Team.} Figure~\ref{fig:impact_number_agents} grows the DivvyBench team from 2 agents to 5. DivvyBench Competitive absorbs the extra agents, since a collision wastes the step only for the agents in it while the rest still take targets, and the baselines show no common trend there. DivvyBench Cooperative and Mixed run the other way, since a Cooperative target is taken only when all $N$ agents select it in the same step and one deviating agent voids the round. If each agent independently selects the shared target with probability $p<1$, the round succeeds with probability about $p^{N}$, which decays as the group grows however close $p$ is to one. Every baseline declines as the team grows, whatever its schema, since none of them makes the shared target certain. ToS reads the division from the shared scene, an input every agent holds, so $p$ remains near one at every team size, and ToS stays near perfect in all 12 panels.

\begin{figure}[H]
\centering
\includegraphics[width=\linewidth]{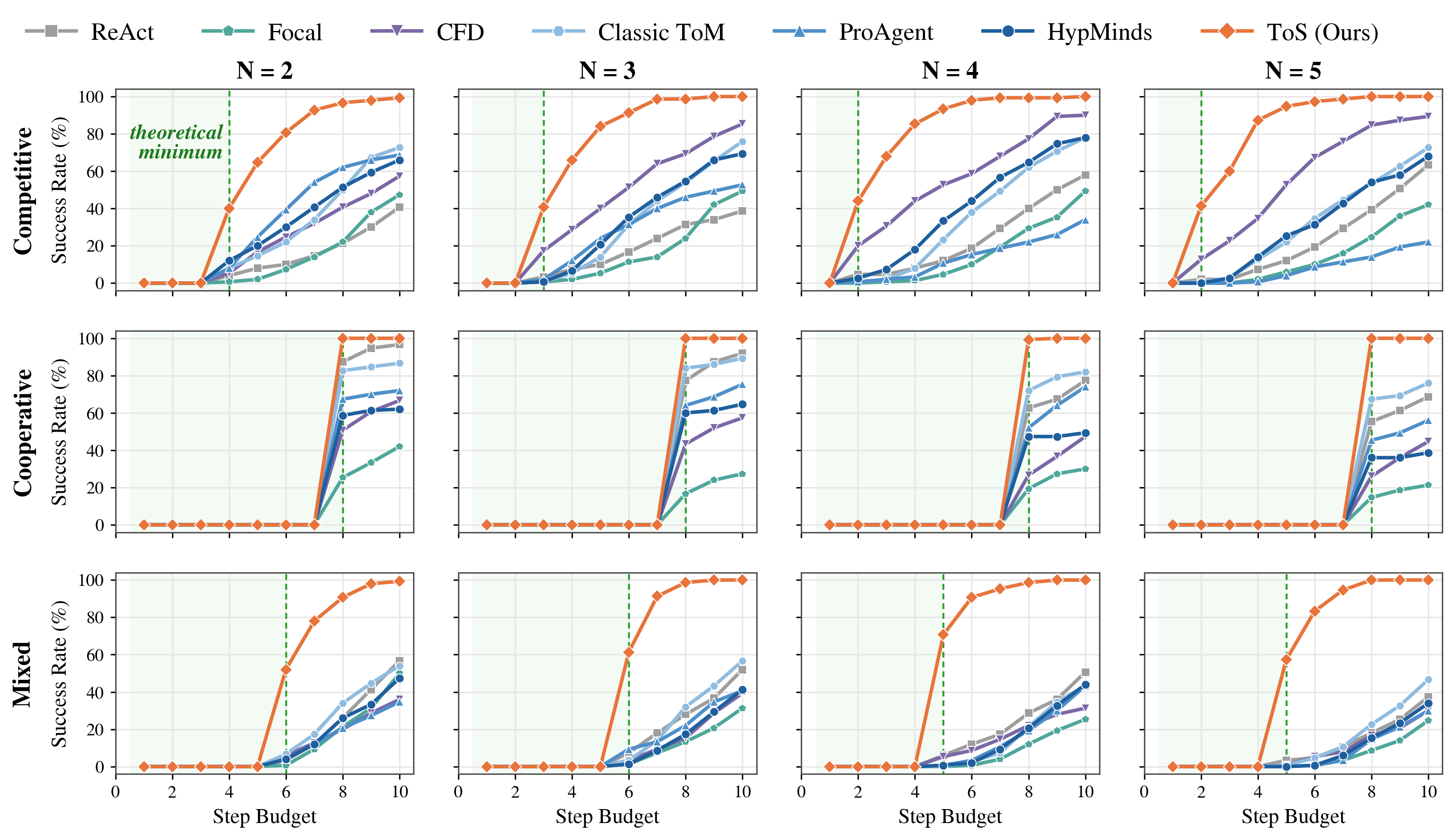}
\caption{\textbf{Team size leaves ToS unchanged and lowers the baselines in DivvyBench Cooperative and Mixed.} DivvyBench success rate versus step budget from 2 to 5 agents, one panel per setting and team size.}
\label{fig:impact_number_agents}
\end{figure}

\paragraph{Scaling the Population.} Figure~\ref{fig:impact_population} scales the GovSim population from 5 to 100 while the pool stays at capacity 100. Every baseline decays as agents are added and has all but collapsed by 50 agents. ToS keeps its survival constant up to 20 agents in Fishery and Pasture and sustains the commons for more than half the horizon at 50. A fixed pool yields a fixed sustainable harvest, so the share each agent may take falls with every agent added, and a single agent that takes the whole pool collapses it for the group. At 100 agents the sustainable share is half a unit and a harvest is an integer, so the pool survives only if half of the agents harvest one unit and the other half harvest nothing.

\begin{figure}[H]
\centering
\includegraphics[width=\linewidth]{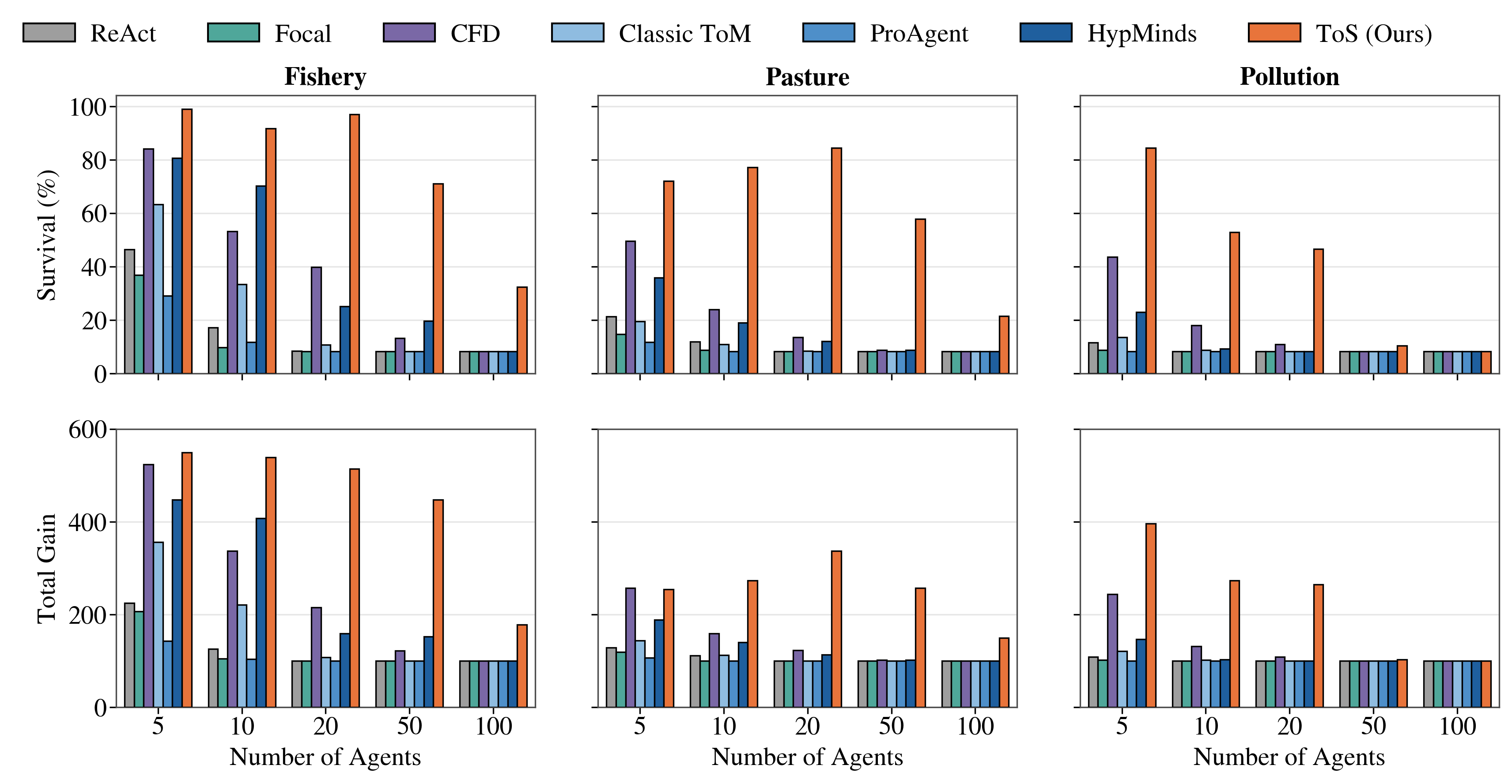}
\caption{\textbf{ToS degrades more slowly than the baselines as the population grows.} GovSim survived rounds and total gain over populations from 5 to 100 on a pool of fixed capacity.}
\label{fig:impact_population}
\end{figure}

\paragraph{Per-Agent Demand Distribution.} Figure~\ref{fig:impact_population_hist} isolates the first round, when the pool is full and every method is still alive, and plots each agent's demand against the sustainable per-agent share. Every baseline leaves over-takers at every population size, and at 100 agents ReAct, Focal, Classic ToM, and ProAgent each leave more than 20\% of the population over-taking, through one of two errors. The prediction-free ReAct and Focal size their demand so that what they leave behind regrows to full capacity, a calculation valid for one harvester, and arrive at half the pool, a quantity that does not fall as agents are added. The prediction-based Classic ToM and ProAgent instead anticipate heavy grazing by the others and harvest ahead of it. CFD and HypMinds make neither error and reduce their demand as the group grows, which is why they leave the fewest over-takers of the baselines. ToS divides the sustainable harvest by the number of agents it reads in the scene, and its over-takers are negligible at every population size. Appendix~\ref{appendix:failure_tom} examines how ToM's prediction itself breaks down as the team grows.

\begin{figure}[H]
\centering
\includegraphics[width=\linewidth]{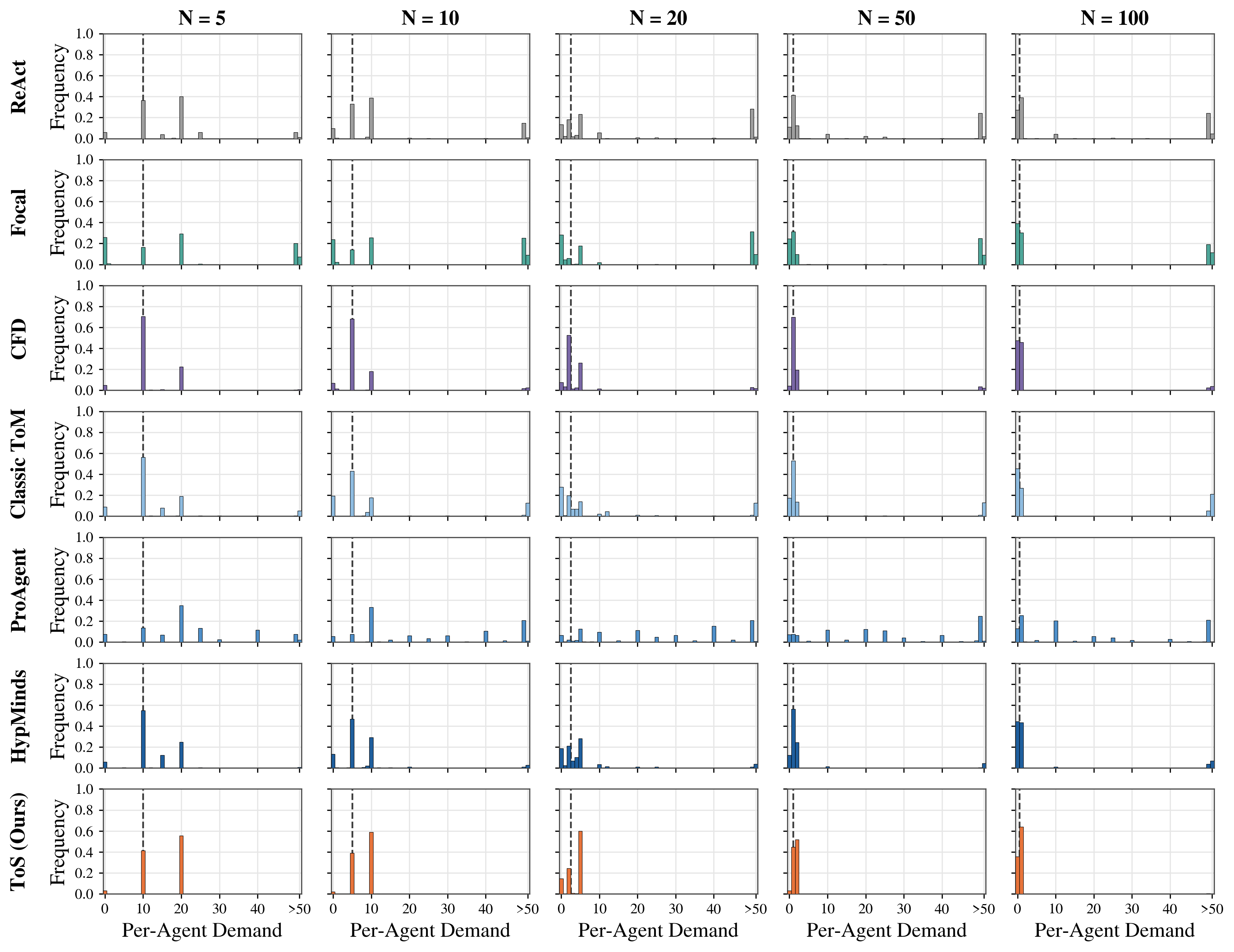}
\caption{\textbf{The baselines leave over-takers at every population size and ToS leaves almost none.} Per-agent demand on GovSim (Pasture, first round), the fraction of the population at each demand level for every method and population, with the dashed line at the sustainable per-agent share. An over-taker asks for half the pool or more, which is the whole team's sustainable allowance and $N$ times the dashed line.}
\label{fig:impact_population_hist}
\end{figure}

\subsection{Impact of the Interaction Horizon}
\label{appendix:impact_horizon}

\paragraph{Long-Horizon Survival.} Figure~\ref{fig:impact_horizon} runs GovSim for 120 rounds, $10\times$ the default horizon. In Fishery, ToS sustains its harvest for the length of the game, and since the horizon is fixed and known, exhausting the resource in the final rounds, when preserving it has no remaining value, is the optimal end-game action. ToS reaches that point with the pool still at capacity, whereas CFD and HypMinds lose theirs earlier, so the fall in the survival curve marks the end game for ToS and a collapse for CFD and HypMinds. ToS also harvests the most over the rounds in which all three are alive, since a pool held at capacity regrows the most each round. The advantage measured over 12 rounds in the main results is therefore a lower bound on what a longer game reveals.

\begin{figure}[H]
\centering
\includegraphics[width=\linewidth]{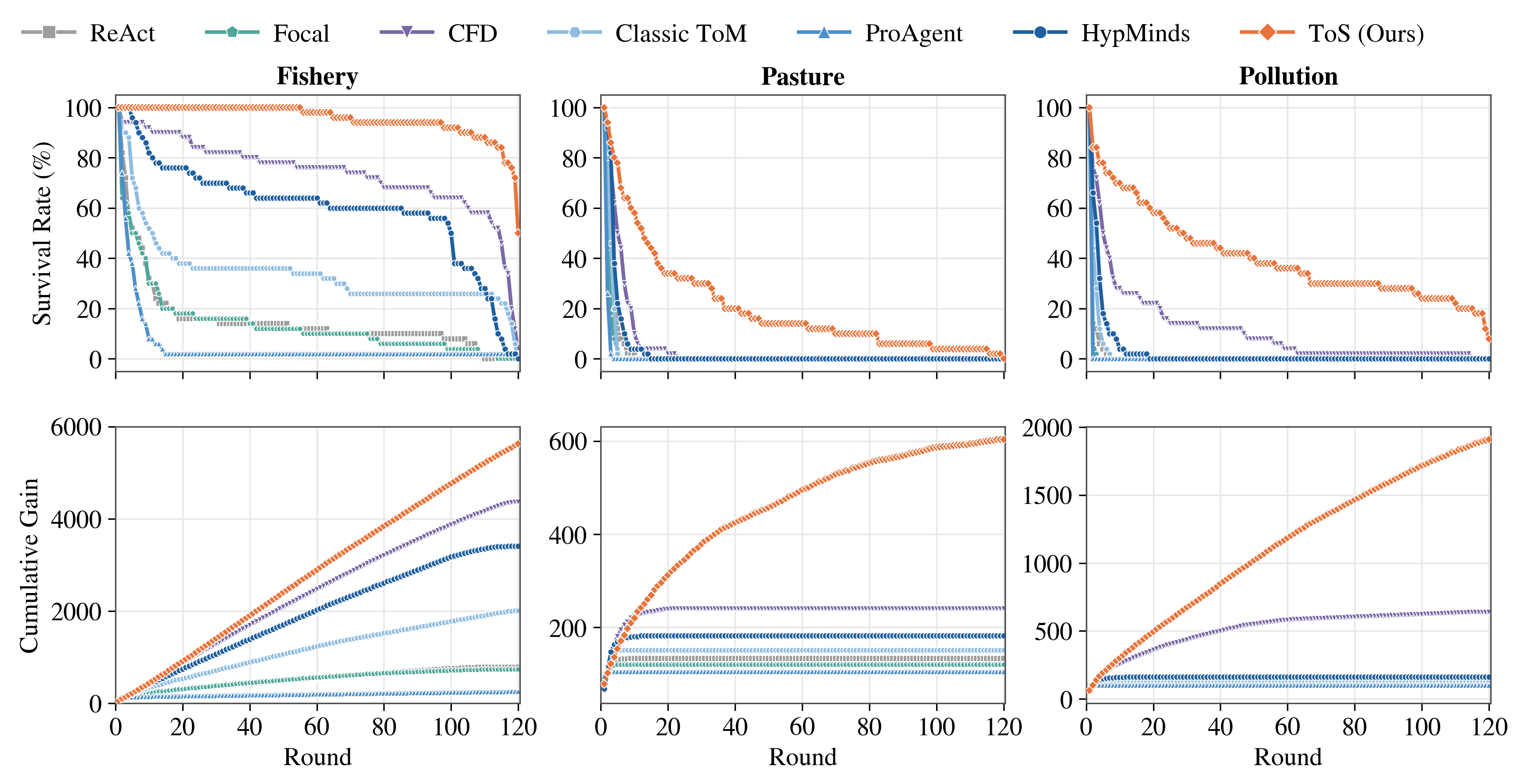}
\caption{\textbf{ToS's advantage compounds over a long horizon.} GovSim run for 120 rounds, $10\times$ the default horizon, reporting the survival rate (the fraction of runs whose commons is still alive at each round) and the cumulative total gain.}
\label{fig:impact_horizon}
\end{figure}

\paragraph{Premature Collapse.} Appendix~\ref{appendix:impact_number_agents} examines the first round alone, where ToS's over-taking is negligible, while a full run requires that same decision at every round. In the harder scenarios half of its runs have ended by round 12 in Pasture and by round 28 in Pollution. In its ordinary rounds the agents harvest almost exactly the sustainable half and leave the resource at its level, and the collapse is a single round in which they harvest nearly all of it at once. The \texttt{action} field on that round still divides the resource by the number of agents, as it does in every round, and divides the whole pool where every other round divides the sustainable half of it. Reasoning that is correct on average is therefore insufficient on an irreversible resource.

\subsection{Failure of Theory of Mind}
\label{appendix:failure_tom}

\paragraph{Prediction Accuracy.}
Table~\ref{tab:tom_prediction} scores each ToM baseline's prediction against the target the other agent then selected. The first step carries no history, and every later step carries the other agent's past actions, the input belief correction is built to read. Competitive accuracy does not improve once that history arrives and sits near $30\%$ thereafter, so reading the other agent's past does not bring the prediction closer to its next choice. Cooperative accuracy stays above $70\%$ at both points. The prediction is therefore accurate only where the agents must select the same target, which is where the prediction-free ReAct already succeeds (Table~\ref{tab:main}), and it is wrong where they must select different ones. ProAgent adds an intention inference and HypMinds a hypothesis test, and on the later Competitive steps both land within a few points of Classic ToM, so the failure lies in predicting a homogeneous agent, whatever the implementation of the prediction.

\begin{table}[H]
\centering
\caption{\textbf{ToM prediction accuracy depends on the setting and does not improve with history.} The percentage of DivvyBench steps on which a baseline predicts the target the other agent then selected. \emph{No History} is the first step, and \emph{With History} every step after it.}
\label{tab:tom_prediction}
\resizebox{\columnwidth}{!}{\input{00_results/table_tom_prediction.tex}}
\end{table}

\paragraph{Prediction Load.}
Figure~\ref{fig:failure_tom} counts how many other agents each ToM baseline names in a single prediction step, across two tests, DivvyBench from 2 to 5 agents and GovSim from 5 to 100. The three baselines all predict the other agents and differ in how many of them they name. On a small team all three name every other agent, sitting on the cap at every DivvyBench size, so the prediction load scales one-for-one with the team. At a hundred agents, HypMinds keeps enumerating and names 92 of the 99, while Classic ToM and ProAgent stop enumerating and name 26 and 6, best-responding to a near-blind guess about the rest. Enumeration can lose the decision itself, since a prediction that long exceeds the generation budget or cannot be parsed, and at a hundred agents this removes 48\% of HypMinds's decisions and 37\% of Classic ToM's, leaving the agent to fall back on a default move. ProAgent holds that loss to 7\% by naming almost no one. ToS conditions on the shared scene at no per-agent cost, so its reasoning does not lengthen as the team grows and still accounts for every agent through the scene.

\begin{figure}[H]
\centering
\includegraphics[width=\linewidth]{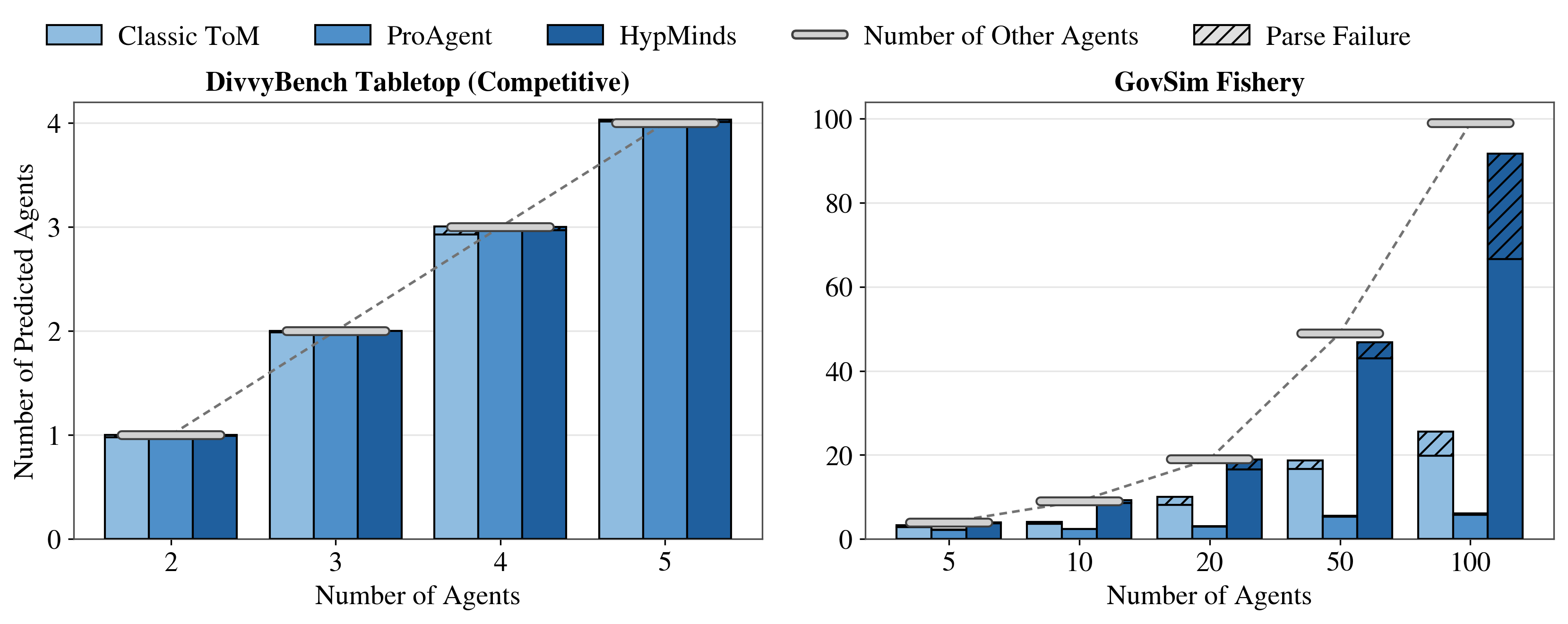}
\caption{\textbf{Predicting a team does not scale.} Each bar is the mean number of other agents a ToM method names in a single prediction step, and the rounded cap marks the number of other agents ($N-1$). The hatched part is what the method predicted in responses whose reasoning failed to parse, so the agent fell back to a default move. Responses cut off at the generation budget are excluded from the mean.}
\label{fig:failure_tom}
\end{figure}

\subsection{Analysis of Per-Agent Distributions}
\label{appendix:delta}

Table~\ref{tab:delta} estimates $\delta_{ij}$, the quantity the two only-if conditions of Proposition~\ref{prop:role} constrain and the proposition leaves unmeasured. The estimate is taken at the first step, which every episode of a scenario enters on the same scene, so the actions recorded over those episodes give each agent's distribution over one set of targets. A team of 2 agents holds a single pair, so $\delta_{ij}$ is at once $\delta_{\min}$ and $\delta_{\max}$ and both conditions constrain one number. The complement $1-\delta_{ij}$ is the overlap of the two distributions and bounds from above the probability that the agents select the same target. A distance of $0$ is necessary for converging and not sufficient, since agents sampling one distribution select the same target only as often as that distribution concentrates on it. In DivvyBench Cooperative every method is near $0$, so the necessary condition is met by homogeneous agents that read nothing. No method approaches $1$ in DivvyBench Competitive, where ToS reaches $0.780$ and CFD, the highest baseline, reaches $0.480$. Across the two settings each method runs one unchanged schema, so any change in $\delta_{ij}$ from one to the other comes from how the method reads the scene. ToS changes it by $0.780$ against $0.440$ for CFD, the largest change among the baselines, so the two readings of Equation~\eqref{eq:tos} account for the change.

\begin{table}[H]
\centering
\caption{\textbf{Only ToS moves the distance toward the value each step requires.} The total variation distance $\delta_{ij}$ between the target distributions of A0 and A1 on the first step of DivvyBench. The arrows give the value Proposition~\ref{prop:role} requires, $1$ where the step's targets have to be divided and $0$ where one of them has to be selected together.}
\label{tab:delta}
\input{00_results/table_delta.tex}
\end{table}

\subsection{Confidence Intervals and Significance Tests}
\label{appendix:significance}

Table~\ref{tab:significance} reports how far ToS leads each baseline, as a two-sided 95\% interval on every gap together with a test of that gap. The interval is a percentile bootstrap over 10{,}000 resamples and the test is a permutation test on the difference of means, with every resample and every permutation kept inside one DivvyBench scenario, one GovSim scenario, or one Overcooked recipe level, so the difficulty spread across those strata cannot enter the null distribution, and Overcooked uses the level-normalized statistic of the main table. ToS's margin is significant in every comparison but ReAct in DivvyBench Cooperative, where ReAct reaches 96.7\% against ToS's 100.0\% and the test returns $p=0.054$ for that 3.3-point gap even though its interval excludes zero, since the permutation test is the more conservative of the two at this sample size.

\begin{table}[H]
\centering
\caption{\textbf{ToS's lead is significant in 35 of the 36 comparisons.} Each baseline row gives ToS's lead over that baseline on the statistic Table~\ref{tab:main} reports, and the brackets hold the two-sided 95\% interval of the number before them. The ToS row gives its own score. \cmark{} marks a gap significant at $p<0.05$ and \xmark{} one that is not.}
\label{tab:significance}
\resizebox{0.7\columnwidth}{!}{\input{00_results/table_significance.tex}}
\end{table}

\newpage
\section{Proofs of Propositions}
\label{appendix:proofs}

\begin{proof}[Proof of Proposition~\ref{prop:blind}]
The team diverges when the $N$ agents select $N$ distinct targets. Each unordered set of $N$ targets is assigned to the $N$ agents in $N!$ orders, so the event has probability $N!\,e_N(p)$. By Maclaurin's inequality, $e_N(p)\le\binom{K}{N}K^{-N}$, with equality at the uniform distribution, so $\max_{p} N!\,e_N(p)=N!\binom{K}{N}K^{-N}=\prod_{n=1}^{N-1}\big(1-\tfrac{n}{K}\big)$. The team converges when all $N$ agents select the same target, an event of probability $\sum_{k}p_k^{N}$, which attains $1$ when $p$ assigns probability $1$ to a single target.
\end{proof}

\begin{proof}[Proof of Proposition~\ref{prop:role}]
Agents $i$ and $j$ select the same target with probability $\sum_k p^{(i)}_k p^{(j)}_k$, since both select target $k$ with probability $p^{(i)}_k p^{(j)}_k$. If the team converges, every pair of agents selects the same target, so $P_{\mathrm{conv}}\le\sum_k p^{(i)}_k p^{(j)}_k$ for every pair $i\ne j$, and applying $p^{(i)}_k p^{(j)}_k\le\min\big(p^{(i)}_k,p^{(j)}_k\big)$ term by term gives $P_{\mathrm{conv}}\le\sum_k\min\big(p^{(i)}_k,p^{(j)}_k\big)=1-\delta_{ij}$. Taking the pair with the largest distance yields $P_{\mathrm{conv}}\le 1-\delta_{\max}$, and $P_{\mathrm{conv}}=1$ requires $\delta_{\max}=0$. If the team diverges, no pair of agents selects the same target, so $P_{\mathrm{div}}\le 1-\sum_k p^{(i)}_k p^{(j)}_k$ for every pair, and by Cauchy-Schwarz together with $\sqrt{p^{(i)}_k p^{(j)}_k}\ge\min\big(p^{(i)}_k,p^{(j)}_k\big)$,
\begin{equation*}
\sum_k p^{(i)}_k p^{(j)}_k\;\ge\;\frac{1}{K}\Big(\sum_k\sqrt{p^{(i)}_k p^{(j)}_k}\Big)^{2}\;\ge\;\frac{1}{K}\Big(\sum_k\min\big(p^{(i)}_k,p^{(j)}_k\big)\Big)^{2}\;=\;\frac{(1-\delta_{ij})^{2}}{K}.
\end{equation*}
Taking the pair with the smallest distance yields $P_{\mathrm{div}}\le 1-(1-\delta_{\min})^{2}/K$, and $P_{\mathrm{div}}=1$ requires $\delta_{\min}=1$. At $\delta_{\min}=0$ this bound reduces to $1-1/K$, which is Proposition~\ref{prop:blind} at $N=2$, so one shared distribution is the case $\delta_{ij}=0$ for every pair.
\end{proof}

\newpage
\section{Experimental Details}
\label{appendix:setup}

\subsection{Configuration and Benchmarks}
\label{appendix:config}

\paragraph{Implementation Details.} Table~\ref{tab:impl} lists the implementation configuration. The settings below are those of the main experiments, and the studies in Appendix~\ref{appendix:further_analysis} vary one of them at a time. We decode with the backbone's thinking mode disabled, so the generation budget goes to the reasoning schema itself. A response the model does not close as valid JSON is repaired when the error is a common one, an unquoted key, a single-quoted string, or a trailing comma, and when it still cannot be read the agent waits.

\begin{table}[H]
\centering
\caption{Implementation details.}
\label{tab:impl}
\resizebox{0.5\columnwidth}{!}{
\begin{tabular}{l l}
\toprule
\textbf{Parameter} & \textbf{Value} \\
\midrule
Backbone & Qwen3.5-35B-A3B \\
Temperature & 0.7 \\
Generation budget & 8192 tokens \\
Max model length & 16384 tokens \\
Thinking mode & Disabled \\
Serving & vLLM on 4 NVIDIA H100 \\
\bottomrule
\end{tabular}
}
\end{table}

\paragraph{Benchmarks.} Table~\ref{tab:bench} lists the three benchmarks, DivvyBench, which we introduce, and the existing GovSim and Overcooked. GovSim gives a group of agents one regenerating pool and asks each of them privately how much to harvest this round, so the pool survives only while the group holds its total under the regrowth. Overcooked puts two cooks in a kitchen whose stations are shared and single-use, so a dish is delivered only when they divide the pipeline between them.

\begin{table}[H]
\centering
\caption{Benchmark details.}
\label{tab:bench}
\resizebox{\columnwidth}{!}{
\begin{tabular}{l l l l}
\toprule
 & \textbf{DivvyBench} & \textbf{GovSim}~\citep{piatti2024cooperate} & \textbf{Overcooked}~\citep{sun2025collab} \\
\midrule
Agents & 2 & 5 & 2 \\
Public role & Leader, Follower & A0 to A4 & Chef, Assistant \\
Scenarios & Tabletop, Airspace, Household & Fishery, Pasture, Pollution & Levels 1--6 \\
Settings & Competitive, Cooperative, Mixed & - & - \\
Action & One target & An integer harvest & One operation (e.g., \texttt{pickup}, \texttt{cut}) \\
Episode budget & 10 steps & 12 rounds & 50 steps \\
Trials & 150 per setting & 50 per scenario & 15 per level \\
Metrics & Success, Stuck & Survival, Total Gain & Throughput \\
\bottomrule
\end{tabular}
}
\end{table}

\subsection{DivvyBench}
\label{appendix:divvybench}

\paragraph{Rules.} A team of $N$ homogeneous agents faces 8 targets. Every agent sees all of them in the same order, may select any target still standing or wait, and all of them act in the same step without seeing what the others chose and without exchanging a message. A target that no agent takes stays where it is, and the step is wasted. A target is either Competitive or Cooperative, which the prompts label light and heavy, and the three settings differ only in the mix.
\begin{itemize}
\item \textbf{Competitive} contains only Competitive targets, each taken when exactly one agent selects it, while two or more agents selecting it collide and take nothing.
\item \textbf{Cooperative} contains only Cooperative targets, each taken only when every agent selects it in the same step, while anything short of that takes nothing.
\item \textbf{Mixed} makes the first 4 targets Cooperative and the rest Competitive, so both rules apply at once.
\end{itemize}

\paragraph{Scenarios.} One set of rules runs in three scenarios, which differ in what the targets are. \textbf{Tabletop} puts 8 balls on a table, named by color, red, blue, green, yellow, orange, purple, pink, and brown. \textbf{Airspace} puts 8 areas in a survey grid, A1, A2, B1, B2, C1, C2, D1, and D2. \textbf{Household} puts 8 indoor locations taken from \citet{zhang2024building}, the bedroom, dishwasher, fridge, kitchen cabinet, living room, microwave, office, and stove. A scenario holds its targets fixed across episodes, so what varies from episode to episode is the sampling, never the layout, and Figure~\ref{fig:trace_divvybench} shows one Tabletop episode of each pure setting.

\paragraph{Metrics.} Two numbers summarize an episode, one for whether the team finished and one for how much of its effort was wasted getting there. In DivvyBench, the theoretical minimum is known in closed form, and an episode scores all or nothing, so each method is measured against the best any team could do as well as against the other methods. Figure~\ref{fig:divvybench_motivation} brackets a step count with two references. A run at the theoretical minimum spends one step on each Cooperative target and shares the Competitive ones out among the agents, which is 4, 8, and 6 steps for the three settings with two agents, while a single agent takes one Competitive target per step and has nobody to collide with, so it completes the Competitive setting in 8 steps however it reasons, and a team that needs more steps performs worse than one agent alone.
\begin{itemize}
\item \textbf{Success} is the fraction of the 150 episodes in which the team takes every target within the 10-step budget, 50 episodes in each scenario. The budget leaves little slack over the theoretical minimum, so finishing is close to all-or-nothing.
\item \textbf{Stuck} is the fraction of steps that take nothing, whether from a collision on a Competitive target, a partial reach on a Cooperative one, or a wait. It keeps the methods apart once success saturates and says how a team failed, since missing the budget with few stuck steps means it was slow while many means it was deadlocked.
\end{itemize}

\begin{figure}[H]
\centering
\begin{minipage}[t]{0.485\textwidth}
\input{10_prompts/trace_divvybench_competitive.tex}
\end{minipage}\hfill
\begin{minipage}[t]{0.485\textwidth}
\input{10_prompts/trace_divvybench_cooperative.tex}
\end{minipage}
\caption{\textbf{One Tabletop episode in each setting.} The agents act at the same step, and the same joint action has opposite outcomes in the two settings. On the left, two agents on one ball collect neither (\xmark) and leave the table unchanged, and on the right they collect it together (\cmark), where a split would collect nothing.}
\label{fig:trace_divvybench}
\end{figure}

\subsection{GovSim}
\label{appendix:govsim}

\paragraph{Rules.} $N$ agents share one regenerating pool for at most 12 rounds. The pool holds up to 100 units, and at the start of a round every agent privately names an integer between 0 and 100, and that many units are removed at once. What remains then doubles, capped at the capacity, and the pool collapses once it falls below 5 units. Each agent is told to maximize what it harvests over the long run, and its observation carries the current pool level and the harvests every agent made in the earlier rounds. The three scenarios put the same arithmetic in three framings, a lake of fish, a public pasture, and a river shared by factories. We remove two things GovSim gives the agents. \emph{(i)} The communication period, in which every agent's harvest is announced and the agents negotiate and persuade one another before the next round. We drop the sentences that announce it and never run the exchange, so nothing an agent writes reaches another. \emph{(ii)} The universalization hint, a line telling an agent what the resource would do if every agent took what it is about to take. We never inject it, since it would hand the agent, from outside, the reasoning a coordination method is meant to produce.

\paragraph{Metrics.} Two numbers summarize a run, one for how long the resource lasts and one for how much comes out of it.
\begin{itemize}
\item \textbf{Survival} is the number of rounds the pool stays above the collapse threshold, reported as a percentage of the 12-round horizon.
\item \textbf{Total Gain} is the units harvested over the whole run, summed across the agents and the rounds.
\end{itemize}

\subsection{Overcooked}
\label{appendix:overcooked}

\paragraph{Rules.} Two agents cook in one kitchen for at most 50 steps. A recipe is a pipeline running from fetching an ingredient through cutting or blending it, cooking or baking it, plating it, and delivering the dish, and every station is single-use, so a station one agent is working is closed to the other for that step. Each of the six difficulty levels holds five recipes, and each recipe is run three times, giving 15 episodes per level. We change two things. \emph{(i)} We run the open kitchen, in which every station is within reach of either agent and each of them holds the full recipe, so no agent is confined to a part of the pipeline by what it can reach. The layout is close to the Cramped Room of \citet{carroll2019utility}, where the agents share one room and contend for the same stations. \emph{(ii)} Collab-Overcooked's action space carries a request action beside the cooking operations, which hands another agent an operation to perform and sends it a message. We remove that action, so nothing an agent can do addresses another and the shared kitchen state is all they hold in common.

\paragraph{Metrics.} Every method delivers the dish on almost every episode, so success does not separate them and we report throughput, the number of dishes delivered within the 50 steps. The main table aggregates it over the six levels in two ways.
\begin{itemize}
\item \textbf{Raw} is the unweighted mean of the six level means. Yield differs widely across levels, 6.47 dishes at level 1 against 1.20 at level 6 for ToS, so the easier levels dominate it.
\item \textbf{$\times$ReAct} divides each level mean by ReAct's mean at the same level before taking that mean, placing the levels on a common scale. The two levels above then read 1.31 and 1.12 for ToS.
\end{itemize}

\newpage
\section{Prompt Templates}
\label{appendix:prompt}

\subsection{DivvyBench}
\label{appendix:prompt:divvybench}

The three DivvyBench settings use the same task prompt apart from the rules block. The figures show the task description, the rules, and the public role, and at each step the prompt continues with the current state, the legal actions, and the history, followed by the method's reasoning schema.

\begin{figure}[H]
\input{10_prompts/prompt_divvybench_competitive.tex}
\vspace{-10pt}
\caption{Task prompt for DivvyBench Competitive in Tabletop.}
\label{fig:prompt_divvybench_competitive}
\end{figure}

\begin{figure}[H]
\input{10_prompts/prompt_divvybench_cooperative.tex}
\vspace{-10pt}
\caption{Task prompt for DivvyBench Cooperative in Tabletop.}
\label{fig:prompt_divvybench_cooperative}
\end{figure}

\begin{figure}[H]
\input{10_prompts/prompt_divvybench_mixed.tex}
\vspace{-10pt}
\caption{Task prompt for DivvyBench Mixed in Tabletop.}
\label{fig:prompt_divvybench_mixed}
\end{figure}

\begin{figure}[H]
\input{10_prompts/prompt_divvybench_airspace.tex}
\vspace{-10pt}
\caption{Task prompt for DivvyBench Competitive in Airspace.}
\label{fig:prompt_divvybench_airspace}
\end{figure}

\begin{figure}[H]
\input{10_prompts/prompt_divvybench_household.tex}
\vspace{-10pt}
\caption{Task prompt for DivvyBench Competitive in Household.}
\label{fig:prompt_divvybench_household}
\end{figure}

\clearpage
\subsection{Theory of Scene}
\label{appendix:prompt:tos}

Figure~\ref{fig:prompt_tos} gives the ToS reasoning schema, the block appended to the task prompt on every benchmark. Section~\ref{sec:method:tos} writes that schema with one \texttt{action} field. The implementation splits that step in two, a \texttt{plan} field that states the division the readings name and an \texttt{action} field constrained to copy one action verbatim from the currently legal ones, because a schema that emits the action directly returns actions outside that set. Every schema in the comparison carries the same split.

\begin{figure}[H]
\input{10_prompts/prompt_tos.tex}
\vspace{-10pt}
\caption{Prompt for ToS.}
\label{fig:prompt_tos}
\end{figure}

\newpage
\section{Theory of Scene Reasoning Examples}
\label{appendix:reasoning_example}

Figures~\ref{fig:trace_tos_divvybench} to~\ref{fig:trace_tos_overcooked} reproduce ToS reasoning verbatim from the runs, one per benchmark, with the role and task readings highlighted. Role gating yields both of its readings, since on DivvyBench and GovSim the agents read ownership as overlapping and have to settle a division, while on Overcooked they read it as already divided and each advances a stage of the pipeline. The three couplings all appear, \emph{Joint} and \emph{Exclusive} in DivvyBench Mixed, \emph{Exclusive} on GovSim, and \emph{Sequential} on Overcooked.

\begin{figure}[H]
\input{10_prompts/trace_tos_divvybench.tex}
\vspace{-10pt}
\caption{ToS reasoning on DivvyBench Tabletop (Mixed), where one table carries both couplings. On Step 1 the agents read the Cooperative balls as \emph{Joint} and commit to the first one in the shared order, since a Cooperative ball is taken only when all agents select it together. On Step 5 the Cooperative balls are gone, the agents read the remaining Competitive balls as \emph{Exclusive}, and they split them without exchanging a message.}
\label{fig:trace_tos_divvybench}
\end{figure}

\begin{figure}[H]
\input{10_prompts/trace_tos_govsim.tex}
\vspace{-10pt}
\caption{ToS reasoning on GovSim (Fishery). The agents read the pool as shared and each asks for its per-agent share of what the pool can regrow, so the total stays sustainable without anyone proposing it. Only A0 is expanded, and the other four reach the same two readings and the same catch.}
\label{fig:trace_tos_govsim}
\end{figure}

\begin{figure}[H]
\input{10_prompts/trace_tos_overcooked.tex}
\vspace{-10pt}
\caption{ToS reasoning on Overcooked. The recipe is a chain and the public role gives each agent its own stations, so ownership reads as divided and no agent settles a division. On Step 1, A1 opens the chain by fetching the pepper while A0, whose stations have nothing to work on yet, fetches the dish the order will be served on. On Step 2, A1 loads the board and A0 waits, since nothing in its own part is ready.}
\label{fig:trace_tos_overcooked}
\end{figure}

\end{document}

%% file: 00_results/table_main.tex
\begin{tabular}{c lcccccccc}
\toprule
\multirow{9}{*}{\rotatebox[origin=c]{90}{\textbf{DivvyBench}}} &  & \multicolumn{2}{c}{Competitive} & \multicolumn{2}{c}{Cooperative} & \multicolumn{2}{c}{Mixed} & \multicolumn{2}{c}{Avg.} \\
\cmidrule(lr){3-4}\cmidrule(lr){5-6}\cmidrule(lr){7-8}\cmidrule(lr){9-10}
 & Method & Success (\%)  $\uparrow$ & Stuck (\%)  $\downarrow$ & Success (\%)  $\uparrow$ & Stuck (\%)  $\downarrow$ & Success (\%)  $\uparrow$ & Stuck (\%)  $\downarrow$ & Success (\%)  $\uparrow$ & Stuck (\%)  $\downarrow$ \\
\cmidrule{2-10}
 & ReAct & 40.7 \std{4.0} & 48.7 \std{1.4} & 96.7 \std{1.5} & 2.5 \std{0.7} & 56.7 \std{4.1} & 26.9 \std{1.1} & 64.7 \std{2.3} & 26.0 \std{1.1} \\
 & Focal & 47.3 \std{4.1} & 45.4 \std{1.4} & 42.0 \std{4.0} & 36.6 \std{2.4} & 50.0 \std{4.1} & 29.0 \std{1.1} & 46.4 \std{2.4} & 37.0 \std{1.1} \\
 & CFD & 57.3 \std{4.1} & 31.8 \std{1.8} & 66.7 \std{3.9} & 20.5 \std{2.1} & 36.0 \std{3.9} & 33.9 \std{1.7} & 53.3 \std{2.4} & 28.7 \std{1.1} \\
 & Classic ToM & 72.7 \std{3.7} & 35.6 \std{1.5} & 86.7 \std{2.8} & 8.2 \std{1.6} & 54.0 \std{4.1} & 27.1 \std{1.6} & 71.1 \std{2.1} & 23.6 \std{1.1} \\
 & ProAgent & 68.7 \std{3.8} & 37.2 \std{1.7} & 72.0 \std{3.7} & 20.1 \std{2.7} & 34.7 \std{3.9} & 35.0 \std{1.5} & 58.4 \std{2.3} & 30.8 \std{1.2} \\
 & HypMinds & 66.0 \std{3.9} & 39.5 \std{1.7} & 62.0 \std{4.0} & 26.6 \std{2.9} & 47.3 \std{4.1} & 30.9 \std{1.5} & 58.4 \std{2.3} & 32.3 \std{1.2} \\
 & \cellcolor{ours}ToS (Ours) & \cellcolor{ours}\textbf{99.3} \std{0.7} & \cellcolor{ours}\textbf{15.4} \std{1.3} & \cellcolor{ours}\textbf{100.0} \std{0.0} & \cellcolor{ours}\textbf{0.0} \std{0.0} & \cellcolor{ours}\textbf{99.3} \std{0.7} & \cellcolor{ours}\textbf{9.1} \std{0.9} & \cellcolor{ours}\textbf{99.6} \std{0.3} & \cellcolor{ours}\textbf{8.2} \std{0.6} \\
\midrule
\multirow{9}{*}{\rotatebox[origin=c]{90}{\textbf{GovSim}}} &  & \multicolumn{2}{c}{Fishery} & \multicolumn{2}{c}{Pasture} & \multicolumn{2}{c}{Pollution} & \multicolumn{2}{c}{Avg.} \\
\cmidrule(lr){3-4}\cmidrule(lr){5-6}\cmidrule(lr){7-8}\cmidrule(lr){9-10}
 & Method & Survival (\%)  $\uparrow$ & Total Gain  $\uparrow$ & Survival (\%)  $\uparrow$ & Total Gain  $\uparrow$ & Survival (\%)  $\uparrow$ & Total Gain  $\uparrow$ & Survival (\%)  $\uparrow$ & Total Gain  $\uparrow$ \\
\cmidrule{2-10}
 & ReAct & 46.5 \std{4.6} & 226 \std{20} & 21.3 \std{1.9} & 129 \std{4} & 11.7 \std{1.0} & 109 \std{3} & 26.5 \std{2.1} & 155 \std{8} \\
 & Focal & 36.8 \std{4.8} & 207 \std{21} & 14.8 \std{1.5} & 120 \std{5} & 8.8 \std{0.4} & 102 \std{2} & 20.2 \std{1.9} & 143 \std{8} \\
 & CFD & 84.2 \std{3.3} & 525 \std{23} & 49.7 \std{3.8} & \textbf{257} \std{19} & 43.7 \std{4.7} & 244 \std{22} & 59.2 \std{2.7} & 342 \std{16} \\
 & Classic ToM & 63.3 \std{4.1} & 357 \std{23} & 19.5 \std{1.4} & 144 \std{5} & 13.7 \std{1.1} & 122 \std{5} & 32.2 \std{2.3} & 207 \std{12} \\
 & ProAgent & 29.2 \std{3.8} & 143 \std{8} & 11.8 \std{0.8} & 107 \std{2} & 8.3 \std{0.0} & 100 \std{0} & 16.4 \std{1.5} & 117 \std{3} \\
 & HypMinds & 80.7 \std{3.8} & 448 \std{23} & 35.8 \std{2.2} & 189 \std{6} & 23.0 \std{2.9} & 147 \std{9} & 46.5 \std{2.7} & 262 \std{14} \\
 & \cellcolor{ours}ToS (Ours) & \cellcolor{ours}\textbf{99.0} \std{0.4} & \cellcolor{ours}\textbf{550} \std{10} & \cellcolor{ours}\textbf{72.0} \std{4.8} & \cellcolor{ours}254 \std{17} & \cellcolor{ours}\textbf{84.5} \std{4.3} & \cellcolor{ours}\textbf{397} \std{21} & \cellcolor{ours}\textbf{85.2} \std{2.3} & \cellcolor{ours}\textbf{400} \std{14} \\
\midrule
\multirow{9}{*}{\rotatebox[origin=c]{90}{\textbf{Overcooked}}} &  & \multicolumn{6}{c}{Throughput (dishes)  $\uparrow$} & \multicolumn{2}{c}{Avg.} \\
\cmidrule(lr){3-8}\cmidrule(lr){9-10}
 & Method & Level 1 & Level 2 & Level 3 & Level 4 & Level 5 & Level 6 & Raw  $\uparrow$ & $\times$ReAct  $\uparrow$ \\
\cmidrule{2-10}
 & ReAct & 4.93 \std{0.18} & 3.47 \std{0.22} & 2.13 \std{0.19} & 2.27 \std{0.21} & 0.60 \std{0.13} & 1.07 \std{0.12} & 2.41 \std{0.17} & 1.00 \std{0.05} \\
 & Focal & 5.40 \std{0.25} & 3.07 \std{0.25} & 2.07 \std{0.18} & 2.00 \std{0.17} & 1.20 \std{0.14} & 0.87 \std{0.13} & 2.43 \std{0.18} & 1.11 \std{0.07} \\
 & CFD & 4.80 \std{0.22} & 2.47 \std{0.17} & 1.80 \std{0.11} & 2.33 \std{0.16} & 1.47 \std{0.13} & 1.00 \std{0.17} & 2.31 \std{0.14} & 1.16 \std{0.08} \\
 & Classic ToM & 5.53 \std{0.29} & 4.47 \std{0.19} & 2.73 \std{0.15} & 3.27 \std{0.23} & 1.33 \std{0.19} & 1.20 \std{0.11} & 3.09 \std{0.18} & 1.41 \std{0.07} \\
 & ProAgent & 6.00 \std{0.58} & 3.73 \std{0.33} & 2.87 \std{0.19} & \textbf{3.80} \std{0.14} & 1.27 \std{0.12} & 0.80 \std{0.17} & 3.08 \std{0.22} & 1.36 \std{0.07} \\
 & HypMinds & 4.67 \std{0.58} & 5.00 \std{0.35} & 3.07 \std{0.18} & 3.27 \std{0.21} & 1.47 \std{0.17} & \textbf{1.33} \std{0.13} & 3.13 \std{0.19} & 1.49 \std{0.08} \\
 & \cellcolor{ours}ToS (Ours) & \cellcolor{ours}\textbf{6.47} \std{0.39} & \cellcolor{ours}\textbf{5.73} \std{0.44} & \cellcolor{ours}\textbf{3.40} \std{0.24} & \cellcolor{ours}3.53 \std{0.27} & \cellcolor{ours}\textbf{1.67} \std{0.13} & \cellcolor{ours}1.20 \std{0.17} & \cellcolor{ours}\textbf{3.67} \std{0.24} & \cellcolor{ours}\textbf{1.67} \std{0.08} \\
\bottomrule
\end{tabular}

%% file: 00_results/table_ablation.tex
\begin{tabular}{l ccc cc c}
\toprule
 & \multicolumn{3}{c}{\textbf{DivvyBench} - Success (\%) $\uparrow$} & \multicolumn{2}{c}{\textbf{GovSim}} & \multicolumn{1}{c}{\textbf{Overcooked}} \\
\cmidrule(lr){2-4}\cmidrule(lr){5-6}\cmidrule(lr){7-7}
Method & Competitive & Cooperative & Mixed & Survival (\%) $\uparrow$ & Total Gain $\uparrow$ & $\times$ReAct $\uparrow$ \\
\midrule
ToS (Ours) & \textbf{99.3} \std{0.7} & \textbf{100.0} \std{0.0} & 99.3 \std{0.7} & \textbf{85.2} \std{2.3} & 400 \std{14} & \textbf{1.67} \std{0.08} \\
\multicolumn{7}{c}{\cellcolor[gray]{0.85}\rule[-1ex]{0pt}{3ex}\textit{Ablation Study}} \\
\quad w/o \texttt{role} & 97.3 \std{1.3} & \textbf{100.0} \std{0.0} & 99.3 \std{0.7} & \cellcolor{bad}58.1 \std{3.0} & \cellcolor{bad}242 \std{10} & \cellcolor{bad}1.45 \std{0.09} \\
\quad w/o \texttt{task} & \cellcolor{bad}83.3 \std{3.1} & \cellcolor{bad}76.0 \std{3.5} & \cellcolor{bad}18.0 \std{3.1} & \cellcolor{bad}67.9 \std{3.0} & \cellcolor{bad}344 \std{17} & 1.61 \std{0.07} \\
\multicolumn{7}{c}{\cellcolor[gray]{0.85}\rule[-1ex]{0pt}{3ex}\textit{Robustness Study}} \\
\quad \texttt{task}$\rightarrow$\texttt{role} & \textbf{99.3} \std{0.7} & \textbf{100.0} \std{0.0} & \textbf{100.0} \std{0.0} & 80.3 \std{2.7} & 375 \std{14} & 1.51 \std{0.07} \\
\quad w/o $\mathbf{a}_{-i}^{<t}$ & 98.7 \std{0.9} & \textbf{100.0} \std{0.0} & \textbf{100.0} \std{0.0} & 81.5 \std{2.4} & \textbf{407} \std{14} & 1.54 \std{0.08} \\
\bottomrule
\end{tabular}

%% file: 00_results/table_tom_tos.tex
\begin{tabular}{l ccc}
\toprule
 & \multicolumn{3}{c}{\textbf{DivvyBench} - Success (\%) $\uparrow$} \\
\cmidrule(lr){2-4}
Method & Competitive & Cooperative & Mixed \\
\midrule
Classic ToM & 72.7 \std{3.7} & 86.7 \std{2.8} & 54.0 \std{4.1} \\
\quad + \texttt{role} & 64.7 \std{3.9} & 80.0 \std{3.3} & 47.3 \std{4.1} \\
\quad + \texttt{task} & 91.3 \std{2.3} & 68.7 \std{3.8} & 82.7 \std{3.1} \\
\quad + \texttt{role}+\texttt{task} & 88.7 \std{2.6} & 68.7 \std{3.8} & 78.7 \std{3.4} \\
\cellcolor{ours}ToS (Ours) & \cellcolor{ours}\textbf{99.3} \std{0.7} & \cellcolor{ours}\textbf{100.0} \std{0.0} & \cellcolor{ours}\textbf{99.3} \std{0.7} \\
\midrule
 & \multicolumn{2}{c}{\textbf{GovSim}} & \multicolumn{1}{c}{\textbf{Overcooked}} \\
\cmidrule(lr){2-3}\cmidrule(lr){4-4}
 & Survival (\%) $\uparrow$ & Total Gain $\uparrow$ & $\times$ReAct $\uparrow$ \\
\midrule
Classic ToM & 32.2 \std{2.3} & 207 \std{12} & 1.41 \std{0.07} \\
\quad + \texttt{role} & 31.2 \std{2.3} & 191 \std{10} & 1.50 \std{0.09} \\
\quad + \texttt{task} & 31.7 \std{2.5} & 211 \std{13} & 1.63 \std{0.08} \\
\quad + \texttt{role}+\texttt{task} & 39.1 \std{2.8} & 247 \std{15} & 1.66 \std{0.08} \\
\cellcolor{ours}ToS (Ours) & \cellcolor{ours}\textbf{85.2} \std{2.3} & \cellcolor{ours}\textbf{400} \std{14} & \cellcolor{ours}\textbf{1.67} \std{0.08} \\
\bottomrule
\end{tabular}

%% file: 00_results/table_1bit.tex
\begin{tabular}{c l cc cc cc}
\toprule
& & \multicolumn{2}{c}{Competitive} & \multicolumn{2}{c}{Cooperative} & \multicolumn{2}{c}{Mixed} \\
\cmidrule(lr){3-4}\cmidrule(lr){5-6}\cmidrule(lr){7-8}
Temp & Public Role & Success (\%) $\uparrow$ & Stuck (\%) $\downarrow$ & Success (\%) $\uparrow$ & Stuck (\%) $\downarrow$ & Success (\%) $\uparrow$ & Stuck (\%) $\downarrow$ \\
\midrule
\multirow{3}{*}{\textbf{0.0}} & None & \cellcolor{bad}0.7 \std{0.7} & \cellcolor{bad}80.5 \std{1.8} & 100.0 \std{0.0} & 0.0 \std{0.0} & \cellcolor{bad}8.0 \std{2.2} & \cellcolor{bad}48.9 \std{1.2} \\
 & 1 Bit (index) & \cellcolor{bad}92.7 \std{2.1} & \cellcolor{bad}24.4 \std{1.9} & 100.0 \std{0.0} & 0.0 \std{0.0} & 99.3 \std{0.7} & \cellcolor{bad}9.8 \std{0.9} \\
 & 1 Bit (role) & 100.0 \std{0.0} & 9.3 \std{1.0} & 100.0 \std{0.0} & 0.0 \std{0.0} & 100.0 \std{0.0} & 6.4 \std{0.8} \\
\midrule
\multirow{3}{*}{\textbf{0.7}} & None & \cellcolor{bad}12.7 \std{2.7} & \cellcolor{bad}53.1 \std{1.4} & 100.0 \std{0.0} & 0.2 \std{0.1} & \cellcolor{bad}28.0 \std{3.7} & \cellcolor{bad}33.0 \std{1.1} \\
 & 1 Bit (index) & 99.3 \std{0.7} & \cellcolor{bad}19.5 \std{1.3} & 100.0 \std{0.0} & 0.2 \std{0.1} & 98.7 \std{0.9} & \cellcolor{bad}13.7 \std{1.0} \\
 & 1 Bit (role) & 99.3 \std{0.7} & 15.4 \std{1.3} & 100.0 \std{0.0} & 0.0 \std{0.0} & 99.3 \std{0.7} & 9.1 \std{0.9} \\
\midrule
\multirow{3}{*}{\textbf{1.0}} & None & \cellcolor{bad}11.3 \std{2.6} & \cellcolor{bad}51.5 \std{1.3} & 100.0 \std{0.0} & 0.5 \std{0.2} & \cellcolor{bad}22.7 \std{3.4} & \cellcolor{bad}34.0 \std{1.0} \\
 & 1 Bit (index) & 99.3 \std{0.7} & \cellcolor{bad}20.4 \std{1.4} & 100.0 \std{0.0} & 0.5 \std{0.2} & 98.7 \std{0.9} & \cellcolor{bad}13.7 \std{1.0} \\
 & 1 Bit (role) & 99.3 \std{0.7} & 15.1 \std{1.3} & 100.0 \std{0.0} & 0.2 \std{0.1} & 98.0 \std{1.1} & 11.4 \std{0.9} \\
\bottomrule
\end{tabular}

%% file: 00_results/impact_observation_asymmetry.tex
\begin{tabular}{c l cc cc cc}
\toprule
& & \multicolumn{2}{c}{Competitive} & \multicolumn{2}{c}{Cooperative} & \multicolumn{2}{c}{Mixed} \\
\cmidrule(lr){3-4}\cmidrule(lr){5-6}\cmidrule(lr){7-8}
& Method & Success (\%) $\uparrow$ & Stuck (\%) $\downarrow$ & Success (\%) $\uparrow$ & Stuck (\%) $\downarrow$ & Success (\%) $\uparrow$ & Stuck (\%) $\downarrow$ \\
\midrule
\multirow{7}{*}{\rotatebox[origin=c]{90}{\textbf{Shared}}}
 & ReAct & 40.7 \std{4.0} & 48.7 \std{1.4} & 96.7 \std{1.5} & 2.5 \std{0.7} & 56.7 \std{4.1} & 26.9 \std{1.1} \\
 & Focal & 47.3 \std{4.1} & 45.4 \std{1.4} & 42.0 \std{4.0} & 36.6 \std{2.4} & 50.0 \std{4.1} & 29.0 \std{1.1} \\
 & CFD & 57.3 \std{4.1} & 31.8 \std{1.8} & 66.7 \std{3.9} & 20.5 \std{2.1} & 36.0 \std{3.9} & 33.9 \std{1.7} \\
 & Classic ToM & 72.7 \std{3.7} & 35.6 \std{1.5} & 86.7 \std{2.8} & 8.2 \std{1.6} & 54.0 \std{4.1} & 27.1 \std{1.6} \\
 & ProAgent & 68.7 \std{3.8} & 37.2 \std{1.7} & 72.0 \std{3.7} & 20.1 \std{2.7} & 34.7 \std{3.9} & 35.0 \std{1.5} \\
 & HypMinds & 66.0 \std{3.9} & 39.5 \std{1.7} & 62.0 \std{4.0} & 26.6 \std{2.9} & 47.3 \std{4.1} & 30.9 \std{1.5} \\
 & ToS (Ours) & \textbf{99.3} \std{0.7} & \textbf{15.4} \std{1.3} & \textbf{100.0} \std{0.0} & \textbf{0.0} \std{0.0} & \textbf{99.3} \std{0.7} & \textbf{9.1} \std{0.9} \\
\midrule
\multirow{7}{*}{\rotatebox[origin=c]{90}{\textbf{Reversed}}}
 & ReAct & 94.7 \std{1.8} & \textbf{12.7} \std{1.5} & \cellcolor{bad}0.0 \std{0.0} & \cellcolor{bad}78.4 \std{1.2} & \cellcolor{bad}5.3 \std{1.8} & \cellcolor{bad}55.4 \std{1.0} \\
 & Focal & 67.3 \std{3.8} & 32.9 \std{1.5} & \cellcolor{bad}0.7 \std{0.7} & \cellcolor{bad}75.5 \std{1.2} & \cellcolor{bad}12.0 \std{2.7} & \cellcolor{bad}51.0 \std{1.3} \\
 & CFD & 62.0 \std{4.0} & 31.3 \std{1.4} & \cellcolor{bad}4.7 \std{1.7} & \cellcolor{bad}68.2 \std{1.8} & \cellcolor{bad}3.3 \std{1.5} & \cellcolor{bad}62.8 \std{1.3} \\
 & Classic ToM & 85.3 \std{2.9} & 25.3 \std{1.6} & \cellcolor{bad}1.3 \std{0.9} & \cellcolor{bad}80.0 \std{1.3} & \cellcolor{bad}2.0 \std{1.1} & \cellcolor{bad}58.9 \std{1.0} \\
 & ProAgent & 77.3 \std{3.4} & 28.6 \std{1.8} & \cellcolor{bad}1.3 \std{0.9} & \cellcolor{bad}84.5 \std{1.3} & \cellcolor{bad}7.3 \std{2.1} & \cellcolor{bad}55.1 \std{1.3} \\
 & HypMinds & 89.3 \std{2.5} & 19.5 \std{1.6} & \cellcolor{bad}4.0 \std{1.6} & \cellcolor{bad}79.5 \std{1.8} & \cellcolor{bad}4.0 \std{1.6} & \cellcolor{bad}57.1 \std{1.0} \\
 & ToS (Ours) & \textbf{99.3} \std{0.7} & 16.2 \std{1.3} & \textbf{93.3} \std{2.0} & \textbf{7.1} \std{0.9} & \textbf{83.3} \std{3.1} & \textbf{23.9} \std{1.6} \\
\midrule
\multirow{7}{*}{\rotatebox[origin=c]{90}{\textbf{Shuffled}}}
 & ReAct & 92.0 \std{2.2} & 20.8 \std{1.7} & \cellcolor{bad}0.7 \std{0.7} & \cellcolor{bad}77.1 \std{1.3} & \cellcolor{bad}4.0 \std{1.6} & \cellcolor{bad}61.5 \std{1.2} \\
 & Focal & 61.3 \std{4.0} & 34.1 \std{1.5} & \cellcolor{bad}20.0 \std{3.3} & \cellcolor{bad}61.2 \std{2.4} & \cellcolor{bad}6.7 \std{2.0} & \cellcolor{bad}59.2 \std{1.3} \\
 & CFD & 66.0 \std{3.9} & 28.5 \std{1.4} & \cellcolor{bad}18.0 \std{3.1} & \cellcolor{bad}60.1 \std{2.5} & \cellcolor{bad}2.0 \std{1.1} & \cellcolor{bad}64.8 \std{1.2} \\
 & Classic ToM & 92.0 \std{2.2} & 20.7 \std{1.5} & \cellcolor{bad}0.0 \std{0.0} & \cellcolor{bad}84.6 \std{0.9} & \cellcolor{bad}4.0 \std{1.6} & \cellcolor{bad}56.9 \std{1.1} \\
 & ProAgent & 69.3 \std{3.8} & 32.8 \std{1.9} & \cellcolor{bad}0.0 \std{0.0} & \cellcolor{bad}86.9 \std{0.9} & \cellcolor{bad}6.7 \std{2.0} & \cellcolor{bad}59.2 \std{1.4} \\
 & HypMinds & 86.0 \std{2.8} & 26.1 \std{1.7} & \cellcolor{bad}2.7 \std{1.3} & \cellcolor{bad}82.5 \std{1.4} & \cellcolor{bad}4.7 \std{1.7} & \cellcolor{bad}57.1 \std{1.1} \\
 & ToS (Ours) & \textbf{99.3} \std{0.7} & \textbf{15.8} \std{1.3} & \textbf{92.0} \std{2.2} & \textbf{8.1} \std{0.8} & \textbf{80.7} \std{3.2} & \textbf{26.3} \std{1.8} \\
\bottomrule
\end{tabular}

%% file: 00_results/impact_backbone.tex
\begin{tabular}{c l cc cc cc cc}
\toprule
& & \multicolumn{2}{c}{Competitive} & \multicolumn{2}{c}{Cooperative} & \multicolumn{2}{c}{Mixed} & \multicolumn{2}{c}{Avg.} \\
\cmidrule(lr){3-4}\cmidrule(lr){5-6}\cmidrule(lr){7-8}\cmidrule(lr){9-10}
& Method & Success (\%)  $\uparrow$ & Stuck (\%)  $\downarrow$ & Success (\%)  $\uparrow$ & Stuck (\%)  $\downarrow$ & Success (\%)  $\uparrow$ & Stuck (\%)  $\downarrow$ & Success (\%)  $\uparrow$ & Stuck (\%)  $\downarrow$ \\
\midrule
\multirow{7}{*}{\rotatebox[origin=c]{90}{\textbf{Qwen3.5-35B}}}
 & ReAct & 40.7 \std{4.0} & 48.7 \std{1.4} & 96.7 \std{1.5} & 2.5 \std{0.7} & 56.7 \std{4.1} & 26.9 \std{1.1} & 64.7 \std{2.3} & 26.0 \std{1.1} \\
 & Focal & 47.3 \std{4.1} & 45.4 \std{1.4} & 42.0 \std{4.0} & 36.6 \std{2.4} & 50.0 \std{4.1} & 29.0 \std{1.1} & 46.4 \std{2.4} & 37.0 \std{1.1} \\
 & CFD & 57.3 \std{4.1} & 31.8 \std{1.8} & 66.7 \std{3.9} & 20.5 \std{2.1} & 36.0 \std{3.9} & 33.9 \std{1.7} & 53.3 \std{2.4} & 28.7 \std{1.1} \\
 & Classic ToM & 72.7 \std{3.7} & 35.6 \std{1.5} & 86.7 \std{2.8} & 8.2 \std{1.6} & 54.0 \std{4.1} & 27.1 \std{1.6} & 71.1 \std{2.1} & 23.6 \std{1.1} \\
 & ProAgent & 68.7 \std{3.8} & 37.2 \std{1.7} & 72.0 \std{3.7} & 20.1 \std{2.7} & 34.7 \std{3.9} & 35.0 \std{1.5} & 58.4 \std{2.3} & 30.8 \std{1.2} \\
 & HypMinds & 66.0 \std{3.9} & 39.5 \std{1.7} & 62.0 \std{4.0} & 26.6 \std{2.9} & 47.3 \std{4.1} & 30.9 \std{1.5} & 58.4 \std{2.3} & 32.3 \std{1.2} \\
 & \cellcolor{ours}ToS (Ours) & \cellcolor{ours}\textbf{99.3} \std{0.7} & \cellcolor{ours}\textbf{15.4} \std{1.3} & \cellcolor{ours}\textbf{100.0} \std{0.0} & \cellcolor{ours}\textbf{0.0} \std{0.0} & \cellcolor{ours}\textbf{99.3} \std{0.7} & \cellcolor{ours}\textbf{9.1} \std{0.9} & \cellcolor{ours}\textbf{99.6} \std{0.3} & \cellcolor{ours}\textbf{8.2} \std{0.6} \\
\midrule
\multirow{7}{*}{\rotatebox[origin=c]{90}{\textbf{Gemma-4-26B}}}
 & ReAct & 48.7 \std{4.1} & 51.4 \std{2.2} & \textbf{100.0} \std{0.0} & 0.7 \std{0.2} & 45.3 \std{4.1} & 34.3 \std{1.3} & 64.7 \std{2.3} & 28.8 \std{1.3} \\
 & Focal & 34.0 \std{3.9} & 62.0 \std{1.9} & 86.7 \std{2.8} & 9.5 \std{1.2} & 30.0 \std{3.8} & 40.7 \std{1.0} & 50.2 \std{2.4} & 37.4 \std{1.3} \\
 & CFD & 94.0 \std{1.9} & 27.7 \std{1.8} & 91.3 \std{2.3} & 4.9 \std{1.0} & 61.3 \std{4.0} & 28.4 \std{1.5} & 82.2 \std{1.8} & 20.3 \std{1.0} \\
 & Classic ToM & 66.7 \std{3.9} & 47.5 \std{2.1} & \textbf{100.0} \std{0.0} & 0.1 \std{0.1} & 67.3 \std{3.8} & 25.6 \std{1.6} & 78.0 \std{2.0} & 24.4 \std{1.3} \\
 & ProAgent & 62.7 \std{4.0} & 51.8 \std{1.8} & \textbf{100.0} \std{0.0} & \textbf{0.0} \std{0.0} & 62.7 \std{4.0} & 29.5 \std{1.4} & 75.1 \std{2.0} & 27.1 \std{1.3} \\
 & HypMinds & 75.3 \std{3.5} & 45.2 \std{1.8} & 96.7 \std{1.5} & 2.2 \std{0.9} & 82.7 \std{3.1} & 24.2 \std{1.5} & 84.9 \std{1.7} & 23.8 \std{1.2} \\
 & \cellcolor{ours}ToS (Ours) & \cellcolor{ours}\textbf{100.0} \std{0.0} & \cellcolor{ours}\textbf{10.0} \std{0.9} & \cellcolor{ours}\textbf{100.0} \std{0.0} & \cellcolor{ours}\textbf{0.0} \std{0.0} & \cellcolor{ours}\textbf{100.0} \std{0.0} & \cellcolor{ours}\textbf{2.3} \std{0.5} & \cellcolor{ours}\textbf{100.0} \std{0.0} & \cellcolor{ours}\textbf{4.1} \std{0.4} \\
\midrule
\multirow{7}{*}{\rotatebox[origin=c]{90}{\textbf{GPT-OSS-20B}}}
 & ReAct & 92.7 \std{2.1} & 19.8 \std{1.5} & 83.3 \std{3.1} & 9.2 \std{1.6} & 77.3 \std{3.4} & 17.7 \std{1.4} & 84.4 \std{1.7} & 15.5 \std{0.9} \\
 & Focal & 72.7 \std{3.7} & 38.0 \std{1.4} & 10.7 \std{2.5} & 57.6 \std{1.9} & 39.3 \std{4.0} & 35.3 \std{1.5} & 40.9 \std{2.3} & 43.6 \std{1.0} \\
 & CFD & 87.3 \std{2.7} & 22.1 \std{1.4} & 56.0 \std{4.1} & 25.5 \std{1.9} & 62.0 \std{4.0} & 23.3 \std{1.3} & 68.4 \std{2.2} & 23.6 \std{0.9} \\
 & Classic ToM & 86.7 \std{2.8} & 25.1 \std{1.8} & 93.3 \std{2.0} & 3.9 \std{1.2} & 68.0 \std{3.8} & 21.8 \std{1.6} & 82.7 \std{1.8} & 16.9 \std{1.0} \\
 & ProAgent & 81.3 \std{3.2} & 34.1 \std{1.7} & 74.0 \std{3.6} & 16.6 \std{2.3} & 55.3 \std{4.1} & 27.7 \std{1.5} & 70.2 \std{2.2} & 26.1 \std{1.1} \\
 & HypMinds & 85.3 \std{2.9} & 30.7 \std{1.7} & 64.0 \std{3.9} & 23.4 \std{2.6} & 40.7 \std{4.0} & 33.7 \std{1.6} & 63.3 \std{2.3} & 29.3 \std{1.2} \\
 & \cellcolor{ours}ToS (Ours) & \cellcolor{ours}\textbf{100.0} \std{0.0} & \cellcolor{ours}\textbf{9.8} \std{1.1} & \cellcolor{ours}\textbf{100.0} \std{0.0} & \cellcolor{ours}\textbf{0.2} \std{0.1} & \cellcolor{ours}\textbf{99.3} \std{0.7} & \cellcolor{ours}\textbf{5.9} \std{0.8} & \cellcolor{ours}\textbf{99.8} \std{0.2} & \cellcolor{ours}\textbf{5.3} \std{0.5} \\
\midrule
\multirow{7}{*}{\rotatebox[origin=c]{90}{\textbf{Qwen3.5-9B}}}
 & ReAct & 45.3 \std{4.1} & 43.1 \std{1.5} & 70.7 \std{3.7} & 18.3 \std{2.2} & 18.0 \std{3.1} & 43.2 \std{1.4} & 44.7 \std{2.3} & 34.9 \std{1.1} \\
 & Focal & 63.3 \std{3.9} & 32.1 \std{1.5} & 34.7 \std{3.9} & 40.8 \std{2.6} & 13.3 \std{2.8} & 50.0 \std{1.5} & 37.1 \std{2.3} & 40.9 \std{1.1} \\
 & CFD & 82.0 \std{3.1} & 25.8 \std{1.4} & 39.3 \std{4.0} & 35.1 \std{2.2} & 12.0 \std{2.7} & 50.7 \std{1.6} & 44.4 \std{2.3} & 37.2 \std{1.1} \\
 & Classic ToM & 60.0 \std{4.0} & 35.5 \std{1.3} & 71.3 \std{3.7} & 18.3 \std{2.3} & 18.7 \std{3.2} & 46.7 \std{1.6} & 50.0 \std{2.4} & 33.5 \std{1.2} \\
 & ProAgent & 75.3 \std{3.5} & 31.8 \std{1.4} & 46.7 \std{4.1} & 36.6 \std{2.9} & 19.3 \std{3.2} & 45.7 \std{1.6} & 47.1 \std{2.4} & 38.0 \std{1.2} \\
 & HypMinds & 82.7 \std{3.1} & 32.0 \std{1.4} & 37.3 \std{4.0} & 45.0 \std{2.9} & 28.0 \std{3.7} & 39.9 \std{1.5} & 49.3 \std{2.4} & 39.0 \std{1.2} \\
 & \cellcolor{ours}ToS (Ours) & \cellcolor{ours}\textbf{98.0} \std{1.1} & \cellcolor{ours}\textbf{24.0} \std{1.3} & \cellcolor{ours}\textbf{100.0} \std{0.0} & \cellcolor{ours}\textbf{0.1} \std{0.1} & \cellcolor{ours}\textbf{96.0} \std{1.6} & \cellcolor{ours}\textbf{14.8} \std{1.0} & \cellcolor{ours}\textbf{98.0} \std{0.7} & \cellcolor{ours}\textbf{13.0} \std{0.7} \\
\bottomrule
\end{tabular}

%% file: 00_results/impact_heterogeneous.tex
\begin{tabular}{c l cc cc cc cc}
\toprule
& & \multicolumn{2}{c}{Competitive} & \multicolumn{2}{c}{Cooperative} & \multicolumn{2}{c}{Mixed} & \multicolumn{2}{c}{Avg.} \\
\cmidrule(lr){3-4}\cmidrule(lr){5-6}\cmidrule(lr){7-8}\cmidrule(lr){9-10}
A1 & Method & Success (\%) $\uparrow$ & Stuck (\%) $\downarrow$ & Success (\%) $\uparrow$ & Stuck (\%) $\downarrow$ & Success (\%) $\uparrow$ & Stuck (\%) $\downarrow$ & Success (\%) $\uparrow$ & Stuck (\%) $\downarrow$ \\
\midrule
\multirow{7}{*}{\rotatebox[origin=c]{90}{\textbf{Gemma-4-26B}}} & ReAct & 53.3 \std{4.1} & 44.2 \std{1.8} & 98.7 \std{0.9} & 1.4 \std{0.5} & 52.7 \std{4.1} & 29.3 \std{1.0} & 68.2 \std{2.2} & 25.0 \std{1.1} \\
 & Focal & 65.3 \std{3.9} & 37.0 \std{1.4} & 58.0 \std{4.0} & 24.7 \std{2.0} & 61.3 \std{4.0} & 27.9 \std{1.0} & 61.6 \std{2.3} & 29.8 \std{0.9} \\
 & CFD & 86.7 \std{2.8} & 19.0 \std{1.3} & 77.3 \std{3.4} & 13.5 \std{1.7} & 82.7 \std{3.1} & 15.1 \std{1.0} & 82.2 \std{1.8} & 15.9 \std{0.8} \\
 & Classic ToM & 78.7 \std{3.4} & 36.4 \std{1.3} & 97.3 \std{1.3} & 1.6 \std{0.6} & 75.3 \std{3.5} & 21.0 \std{1.4} & 83.8 \std{1.7} & 19.7 \std{0.9} \\
 & ProAgent & 84.7 \std{3.0} & 37.1 \std{1.6} & 90.7 \std{2.4} & 6.2 \std{1.1} & 62.7 \std{4.0} & 26.6 \std{1.5} & 79.3 \std{1.9} & 23.3 \std{1.0} \\
 & HypMinds & 82.7 \std{3.1} & 34.9 \std{1.6} & 75.3 \std{3.5} & 16.7 \std{2.3} & 66.0 \std{3.9} & 24.6 \std{1.5} & 74.7 \std{2.1} & 25.4 \std{1.1} \\
 & \cellcolor{ours}ToS (Ours) & \cellcolor{ours}\textbf{100.0} \std{0.0} & \cellcolor{ours}\textbf{10.5} \std{1.0} & \cellcolor{ours}\textbf{100.0} \std{0.0} & \cellcolor{ours}\textbf{0.0} \std{0.0} & \cellcolor{ours}\textbf{100.0} \std{0.0} & \cellcolor{ours}\textbf{2.8} \std{0.5} & \cellcolor{ours}\textbf{100.0} \std{0.0} & \cellcolor{ours}\textbf{4.5} \std{0.4} \\
\midrule
\multirow{7}{*}{\rotatebox[origin=c]{90}{\textbf{Qwen3.5-9B}}} & ReAct & 50.7 \std{4.1} & 40.2 \std{1.4} & 78.0 \std{3.4} & 12.1 \std{1.7} & 28.7 \std{3.7} & 37.1 \std{1.3} & 52.4 \std{2.4} & 29.8 \std{1.0} \\
 & Focal & 58.0 \std{4.0} & 34.4 \std{1.4} & 44.0 \std{4.1} & 39.8 \std{2.7} & 23.3 \std{3.5} & 44.1 \std{1.5} & 41.8 \std{2.3} & 39.4 \std{1.2} \\
 & CFD & 60.7 \std{4.0} & 30.8 \std{1.4} & 49.3 \std{4.1} & 33.2 \std{2.4} & 16.7 \std{3.1} & 47.1 \std{1.6} & 42.2 \std{2.3} & 37.0 \std{1.1} \\
 & Classic ToM & 81.3 \std{3.2} & 28.1 \std{1.4} & 81.3 \std{3.2} & 12.0 \std{1.9} & 36.0 \std{3.9} & 38.1 \std{1.8} & 66.2 \std{2.2} & 26.1 \std{1.1} \\
 & ProAgent & 80.7 \std{3.2} & 31.6 \std{1.4} & 62.7 \std{4.0} & 27.1 \std{2.8} & 31.3 \std{3.8} & 39.5 \std{1.6} & 58.2 \std{2.3} & 32.7 \std{1.2} \\
 & HypMinds & 89.3 \std{2.5} & 29.8 \std{1.5} & 54.7 \std{4.1} & 32.5 \std{3.0} & 40.7 \std{4.0} & 34.9 \std{1.6} & 61.6 \std{2.3} & 32.4 \std{1.2} \\
 & \cellcolor{ours}ToS (Ours) & \cellcolor{ours}\textbf{98.0} \std{1.1} & \cellcolor{ours}\textbf{27.2} \std{1.4} & \cellcolor{ours}\textbf{100.0} \std{0.0} & \cellcolor{ours}\textbf{0.1} \std{0.1} & \cellcolor{ours}\textbf{96.7} \std{1.5} & \cellcolor{ours}\textbf{13.2} \std{1.0} & \cellcolor{ours}\textbf{98.2} \std{0.6} & \cellcolor{ours}\textbf{13.5} \std{0.8} \\
\bottomrule
\end{tabular}

%% file: 00_results/crossplay.tex
\begin{tabular}{c l cc cc cc cc}
\toprule
& & \multicolumn{2}{c}{Competitive} & \multicolumn{2}{c}{Cooperative} & \multicolumn{2}{c}{Mixed} & \multicolumn{2}{c}{Avg.} \\
\cmidrule(lr){3-4}\cmidrule(lr){5-6}\cmidrule(lr){7-8}\cmidrule(lr){9-10}
A1 & A0 Method & Success (\%) $\uparrow$ & Stuck (\%) $\downarrow$ & Success (\%) $\uparrow$ & Stuck (\%) $\downarrow$ & Success (\%) $\uparrow$ & Stuck (\%) $\downarrow$ & Success (\%) $\uparrow$ & Stuck (\%) $\downarrow$ \\
\midrule
\multirow{7}{*}{\rotatebox[origin=c]{90}{\textbf{ReAct}}} & ReAct & 40.7 \std{4.0} & 48.7 \std{1.4} & 96.7 \std{1.5} & 2.5 \std{0.7} & 56.7 \std{4.1} & 26.9 \std{1.1} & 64.7 \std{2.3} & 26.0 \std{1.1} \\
 & Focal & 63.3 \std{3.9} & 38.5 \std{1.6} & 56.0 \std{4.1} & 26.1 \std{2.2} & 50.7 \std{4.1} & 28.3 \std{1.1} & 56.7 \std{2.3} & 30.9 \std{1.0} \\
 & CFD & 60.0 \std{4.0} & 31.4 \std{1.4} & 77.3 \std{3.4} & 14.3 \std{1.9} & 66.7 \std{3.9} & 18.7 \std{1.2} & 68.0 \std{2.2} & 21.5 \std{0.9} \\
 & Classic ToM & 74.0 \std{3.6} & 29.8 \std{1.7} & 94.7 \std{1.8} & 4.5 \std{1.2} & 73.3 \std{3.6} & 21.1 \std{1.3} & 80.7 \std{1.9} & \textbf{18.5} \std{1.0} \\
 & ProAgent & 82.0 \std{3.1} & 31.6 \std{1.6} & 85.3 \std{2.9} & 9.8 \std{1.7} & 66.0 \std{3.9} & 23.8 \std{1.2} & 77.8 \std{2.0} & 21.7 \std{1.0} \\
 & HypMinds & 88.7 \std{2.6} & \textbf{27.0} \std{1.5} & 82.7 \std{3.1} & 10.8 \std{1.8} & 68.0 \std{3.8} & 23.5 \std{1.3} & 79.8 \std{1.9} & 20.4 \std{0.9} \\
 & \cellcolor{ours}ToS (Ours) & \cellcolor{ours}\textbf{100.0} \std{0.0} & \cellcolor{ours}36.2 \std{1.0} & \cellcolor{ours}\textbf{98.0} \std{1.1} & \cellcolor{ours}\textbf{1.7} \std{0.6} & \cellcolor{ours}\textbf{94.0} \std{1.9} & \cellcolor{ours}\textbf{18.1} \std{0.8} & \cellcolor{ours}\textbf{97.3} \std{0.8} & \cellcolor{ours}18.7 \std{0.8} \\
\midrule
\multirow{7}{*}{\rotatebox[origin=c]{90}{\textbf{Classic ToM}}} & ReAct & 97.3 \std{1.3} & 8.2 \std{1.2} & 88.7 \std{2.6} & 6.2 \std{1.3} & 81.3 \std{3.2} & 13.5 \std{1.2} & 89.1 \std{1.5} & 9.3 \std{0.7} \\
 & Focal & 92.0 \std{2.2} & 15.0 \std{1.4} & 51.3 \std{4.1} & 34.1 \std{2.8} & 81.3 \std{3.2} & 15.2 \std{1.3} & 74.9 \std{2.0} & 21.5 \std{1.2} \\
 & CFD & 84.7 \std{3.0} & 16.6 \std{1.5} & 68.0 \std{3.8} & 18.7 \std{2.1} & 72.0 \std{3.7} & 18.5 \std{1.3} & 74.9 \std{2.0} & 17.9 \std{1.0} \\
 & Classic ToM & 72.7 \std{3.7} & 35.6 \std{1.5} & 86.7 \std{2.8} & 8.2 \std{1.6} & 54.0 \std{4.1} & 27.1 \std{1.6} & 71.1 \std{2.1} & 23.6 \std{1.1} \\
 & ProAgent & 83.3 \std{3.1} & 30.2 \std{1.6} & 78.0 \std{3.4} & 14.4 \std{2.1} & 48.0 \std{4.1} & 31.7 \std{1.6} & 69.8 \std{2.2} & 25.4 \std{1.1} \\
 & HypMinds & 75.3 \std{3.5} & 35.9 \std{1.5} & 77.3 \std{3.4} & 14.0 \std{2.1} & 57.3 \std{4.1} & 28.9 \std{1.5} & 70.0 \std{2.2} & 26.3 \std{1.1} \\
 & \cellcolor{ours}ToS (Ours) & \cellcolor{ours}\textbf{100.0} \std{0.0} & \cellcolor{ours}\textbf{7.9} \std{1.0} & \cellcolor{ours}\textbf{98.0} \std{1.1} & \cellcolor{ours}\textbf{1.9} \std{0.5} & \cellcolor{ours}\textbf{97.3} \std{1.3} & \cellcolor{ours}\textbf{11.8} \std{1.0} & \cellcolor{ours}\textbf{98.4} \std{0.6} & \cellcolor{ours}\textbf{7.2} \std{0.5} \\
\bottomrule
\end{tabular}

%% file: 00_results/table_tom_prediction.tex
\begin{tabular}{l cc cc cc}
\toprule
 & \multicolumn{2}{c}{Competitive} & \multicolumn{2}{c}{Cooperative} & \multicolumn{2}{c}{Mixed} \\
\cmidrule(lr){2-3}\cmidrule(lr){4-5}\cmidrule(lr){6-7}
Method & No History & With History & No History & With History & No History & With History \\
\midrule
Classic ToM & 34.3 \std{2.8} & 30.5 \std{1.1} & 91.7 \std{1.6} & 91.5 \std{0.6} & 45.8 \std{3.0} & 56.2 \std{1.1} \\
ProAgent & 29.5 \std{2.6} & 30.3 \std{1.1} & 90.2 \std{1.7} & 78.4 \std{0.9} & 46.3 \std{3.1} & 49.0 \std{1.0} \\
HypMinds & 43.1 \std{2.9} & 29.6 \std{1.1} & 79.7 \std{2.3} & 73.9 \std{0.9} & 48.6 \std{3.0} & 44.4 \std{1.0} \\
\bottomrule
\end{tabular}

%% file: 00_results/table_delta.tex
\begin{tabular}{l cc}
\toprule
 & \multicolumn{2}{c}{\textbf{DivvyBench} - $\delta_{ij}$ at the first step} \\
\cmidrule(lr){2-3}
Method & Competitive $\uparrow$ & Cooperative $\downarrow$ \\
\midrule
ReAct & 0.307 \std{0.043} & 0.007 \std{0.013} \\
Focal & 0.187 \std{0.039} & 0.100 \std{0.031} \\
CFD & 0.480 \std{0.035} & 0.040 \std{0.024} \\
Classic ToM & 0.327 \std{0.041} & 0.047 \std{0.020} \\
ProAgent & 0.280 \std{0.043} & 0.067 \std{0.027} \\
HypMinds & 0.467 \std{0.043} & 0.047 \std{0.026} \\
\cellcolor{ours}ToS (Ours) & \cellcolor{ours}\textbf{0.780} \std{0.034} & \cellcolor{ours}\textbf{0.000} \std{0.000} \\
\bottomrule
\end{tabular}

%% file: 00_results/table_significance.tex
\begin{tabular}{l ccc}
\toprule
 & \multicolumn{3}{c}{\textbf{DivvyBench} - Success (\%) $\uparrow$} \\
\cmidrule(lr){2-4}
Method & Competitive & Cooperative & Mixed \\
\midrule
ReAct & +58.7 {\scriptsize[51.3, 66.0]} \cmark & +3.3 {\scriptsize[0.7, 6.7]} \xmark & +42.7 {\scriptsize[34.7, 50.7]} \cmark \\
Focal & +52.0 {\scriptsize[44.0, 59.3]} \cmark & +58.0 {\scriptsize[52.7, 62.7]} \cmark & +49.3 {\scriptsize[41.3, 57.3]} \cmark \\
CFD & +42.0 {\scriptsize[36.0, 48.0]} \cmark & +33.3 {\scriptsize[31.3, 35.3]} \cmark & +63.3 {\scriptsize[56.7, 70.0]} \cmark \\
Classic ToM & +26.7 {\scriptsize[19.3, 34.0]} \cmark & +13.3 {\scriptsize[8.7, 18.0]} \cmark & +45.3 {\scriptsize[37.3, 53.3]} \cmark \\
ProAgent & +30.7 {\scriptsize[23.3, 38.0]} \cmark & +28.0 {\scriptsize[22.0, 34.0]} \cmark & +64.7 {\scriptsize[57.3, 72.0]} \cmark \\
HypMinds & +33.3 {\scriptsize[26.0, 41.3]} \cmark & +38.0 {\scriptsize[34.0, 42.0]} \cmark & +52.0 {\scriptsize[44.0, 60.0]} \cmark \\
\cellcolor{ours}ToS (Ours) & \cellcolor{ours}99.3 {\scriptsize[98.0, 100.0]} & \cellcolor{ours}100.0 {\scriptsize[100.0, 100.0]} & \cellcolor{ours}99.3 {\scriptsize[98.0, 100.0]} \\
\midrule
 & \multicolumn{2}{c}{\textbf{GovSim}} & \textbf{Overcooked} \\
\cmidrule(lr){2-3}\cmidrule(lr){4-4}
 & Survival (\%) $\uparrow$ & Total Gain $\uparrow$ & $\times$ReAct $\uparrow$ \\
\midrule
ReAct & +58.7 {\scriptsize[53.3, 63.9]} \cmark & +246 {\scriptsize[223, 269]} \cmark & +0.67 {\scriptsize[0.51, 0.94]} \cmark \\
Focal & +65.0 {\scriptsize[59.6, 70.1]} \cmark & +257 {\scriptsize[234, 280]} \cmark & +0.56 {\scriptsize[0.41, 0.75]} \cmark \\
CFD & +26.0 {\scriptsize[19.8, 31.9]} \cmark & +58 {\scriptsize[29, 88]} \cmark & +0.51 {\scriptsize[0.37, 0.68]} \cmark \\
Classic ToM & +53.0 {\scriptsize[47.9, 58.0]} \cmark & +193 {\scriptsize[169, 217]} \cmark & +0.26 {\scriptsize[0.10, 0.45]} \cmark \\
ProAgent & +68.7 {\scriptsize[63.7, 73.5]} \cmark & +284 {\scriptsize[264, 303]} \cmark & +0.31 {\scriptsize[0.16, 0.49]} \cmark \\
HypMinds & +38.7 {\scriptsize[33.3, 44.1]} \cmark & +139 {\scriptsize[115, 164]} \cmark & +0.18 {\scriptsize[0.02, 0.36]} \cmark \\
\cellcolor{ours}ToS (Ours) & \cellcolor{ours}85.2 {\scriptsize[80.9, 89.2]} & \cellcolor{ours}400 {\scriptsize[382, 419]} & \cellcolor{ours}1.67 {\scriptsize[1.48, 2.05]} \\
\bottomrule
\end{tabular}

%% file: 10_prompts/trace_divvybench_competitive.tex
\begin{tcolorbox}[colback=white!98!gray, colframe=black!70, enhanced, boxrule=0.5pt, arc=1pt, left=5pt, right=5pt, top=5pt, bottom=5pt, fonttitle=\bfseries\small, title=DivvyBench Tabletop (Competitive)]
\scriptsize

\textit{Step 1.} Table: \textcolor{red!85!black}{red} \textcolor{blue!80!black}{blue} \textcolor{green!55!black}{green} \textcolor{yellow!65!black}{yellow} \textcolor{orange!90!black}{orange} \textcolor{purple!85!black}{purple} \textcolor{magenta!60!black}{pink} \textcolor{brown!85!black}{brown}
\begin{itemize}[leftmargin=15pt, noitemsep, topsep=0pt, partopsep=0pt, parsep=0pt, after=\vspace{4pt}]
    \item \textbf{A0} pick \textcolor{red!85!black}{red} $\rightarrow$ \cmark
    \item \textbf{A1} pick \textcolor{blue!80!black}{blue} $\rightarrow$ \cmark
\end{itemize}

\textit{Step 2.} Table: \textcolor{green!55!black}{green} \textcolor{yellow!65!black}{yellow} \textcolor{orange!90!black}{orange} \textcolor{purple!85!black}{purple} \textcolor{magenta!60!black}{pink} \textcolor{brown!85!black}{brown}
\begin{itemize}[leftmargin=15pt, noitemsep, topsep=0pt, partopsep=0pt, parsep=0pt, after=\vspace{4pt}]
    \item \textbf{A0} pick \textcolor{green!55!black}{green} $\rightarrow$ \cmark
    \item \textbf{A1} pick \textcolor{yellow!65!black}{yellow} $\rightarrow$ \cmark
\end{itemize}

\centerline{$\dots$}

\textit{Step 4.} Table: \textcolor{magenta!60!black}{pink} \textcolor{brown!85!black}{brown}
\begin{itemize}[leftmargin=15pt, noitemsep, topsep=0pt, partopsep=0pt, parsep=0pt, after=\vspace{4pt}]
    \item \textbf{A0} pick \textcolor{magenta!60!black}{pink} $\rightarrow$ \xmark
    \item \textbf{A1} pick \textcolor{magenta!60!black}{pink} $\rightarrow$ \xmark
\end{itemize}

\textit{Step 5.} Table: \textcolor{magenta!60!black}{pink} \textcolor{brown!85!black}{brown}
\begin{itemize}[leftmargin=15pt, noitemsep, topsep=0pt, partopsep=0pt, parsep=0pt, after=\vspace{4pt}]
    \item \textbf{A0} pick \textcolor{magenta!60!black}{pink} $\rightarrow$ \cmark
    \item \textbf{A1} pick \textcolor{brown!85!black}{brown} $\rightarrow$ \cmark
\end{itemize}

\textit{The table is empty after 5 steps.}
\end{tcolorbox}

%% file: 10_prompts/trace_divvybench_cooperative.tex
\begin{tcolorbox}[colback=white!98!gray, colframe=black!70, enhanced, boxrule=0.5pt, arc=1pt, left=5pt, right=5pt, top=5pt, bottom=5pt, fonttitle=\bfseries\small, title=DivvyBench Tabletop (Cooperative)]
\scriptsize

\textit{Step 1.} Table: \textcolor{red!85!black}{red} \textcolor{blue!80!black}{blue} \textcolor{green!55!black}{green} \textcolor{yellow!65!black}{yellow} \textcolor{orange!90!black}{orange} \textcolor{purple!85!black}{purple} \textcolor{magenta!60!black}{pink} \textcolor{brown!85!black}{brown}
\begin{itemize}[leftmargin=15pt, noitemsep, topsep=0pt, partopsep=0pt, parsep=0pt, after=\vspace{4pt}]
    \item \textbf{A0} pick \textcolor{red!85!black}{red} $\rightarrow$ \cmark
    \item \textbf{A1} pick \textcolor{red!85!black}{red} $\rightarrow$ \cmark
\end{itemize}

\textit{Step 2.} Table: \textcolor{blue!80!black}{blue} \textcolor{green!55!black}{green} \textcolor{yellow!65!black}{yellow} \textcolor{orange!90!black}{orange} \textcolor{purple!85!black}{purple} \textcolor{magenta!60!black}{pink} \textcolor{brown!85!black}{brown}
\begin{itemize}[leftmargin=15pt, noitemsep, topsep=0pt, partopsep=0pt, parsep=0pt, after=\vspace{4pt}]
    \item \textbf{A0} pick \textcolor{blue!80!black}{blue} $\rightarrow$ \cmark
    \item \textbf{A1} pick \textcolor{blue!80!black}{blue} $\rightarrow$ \cmark
\end{itemize}

\centerline{$\dots$}

\textit{Step 7.} Table: \textcolor{magenta!60!black}{pink} \textcolor{brown!85!black}{brown}
\begin{itemize}[leftmargin=15pt, noitemsep, topsep=0pt, partopsep=0pt, parsep=0pt, after=\vspace{4pt}]
    \item \textbf{A0} pick \textcolor{magenta!60!black}{pink} $\rightarrow$ \cmark
    \item \textbf{A1} pick \textcolor{magenta!60!black}{pink} $\rightarrow$ \cmark
\end{itemize}

\textit{Step 8.} Table: \textcolor{brown!85!black}{brown}
\begin{itemize}[leftmargin=15pt, noitemsep, topsep=0pt, partopsep=0pt, parsep=0pt, after=\vspace{4pt}]
    \item \textbf{A0} pick \textcolor{brown!85!black}{brown} $\rightarrow$ \cmark
    \item \textbf{A1} pick \textcolor{brown!85!black}{brown} $\rightarrow$ \cmark
\end{itemize}

\textit{The table is empty after 8 steps.}
\end{tcolorbox}

%% file: 10_prompts/prompt_divvybench_competitive.tex
\begin{tcolorbox}[colback=white!98!gray, colframe=black!70, enhanced, boxrule=0.5pt, arc=1pt, left=6pt, right=6pt, top=6pt, bottom=6pt, fonttitle=\bfseries\small, title=Task Prompt: DivvyBench Tabletop (Competitive)]
\vspace{-10pt}
\begin{lstlisting}[language=systemprompt, escapeinside={(*}{*)}]
There are (*\pinput{n\_balls}*) balls on a table: (*\pinput{ball\_list}*). You and (*\pinput{n\_teammates}*) teammate(s) form a (*\pinput{n\_agents}*)-robot team, and together you must collect all of them in as few steps as possible.

**Rules**
- Each step, each of you reaches for one ball **at the same time** - there is no turn-taking. You may also wait.
- You **cannot** communicate, and you cannot see which ball anyone else will reach for this step.
- If **two or more** of you reach for the same ball, you collide: none of you gets it and the step is wasted.
- A ball is single-use: if exactly one of you reaches for it, that robot collects it for the team; if two or more of you reach for the same ball you collide and nobody gets it (the step is wasted). Keep going until the table is empty.

## Who you are
You are the **(*\pinput{role}*)** of this two-robot team (agent A(*\pinput{agent\_id}*)); your teammate is the **(*\pinput{other\_role}*)** (agent A(*\pinput{other\_id}*)). You both reason in exactly the same way -- your role label is the only thing that distinguishes you.
\end{lstlisting}
\vspace{-10pt}
\end{tcolorbox}

%% file: 10_prompts/prompt_divvybench_cooperative.tex
\begin{tcolorbox}[colback=white!98!gray, colframe=black!70, enhanced, boxrule=0.5pt, arc=1pt, left=6pt, right=6pt, top=6pt, bottom=6pt, fonttitle=\bfseries\small, title=Task Prompt: DivvyBench Tabletop (Cooperative)]
\vspace{-10pt}
\begin{lstlisting}[language=systemprompt, escapeinside={(*}{*)}]
There are (*\pinput{n\_balls}*) balls on a table: (*\pinput{ball\_list}*). You and (*\pinput{n\_teammates}*) teammate(s) form a (*\pinput{n\_agents}*)-robot team, and together you must collect all of them in as few steps as possible.

**Rules**
- Each ball is **heavy**: collecting it needs all of you to lift it together.
- Each step, each of you reaches for one ball **at the same time** - there is no turn-taking. You may also wait.
- You **cannot** communicate, and you cannot see which ball anyone else will reach for this step.
- If **all of you** reach for the same ball, you lift it together and collect it.
- If you do **not all** reach for the same ball (someone reaches elsewhere or waits), nothing is collected and the step is wasted.

## Who you are
You are the **(*\pinput{role}*)** of this two-robot team (agent A(*\pinput{agent\_id}*)); your teammate is the **(*\pinput{other\_role}*)** (agent A(*\pinput{other\_id}*)). You both reason in exactly the same way -- your role label is the only thing that distinguishes you.
\end{lstlisting}
\vspace{-10pt}
\end{tcolorbox}

%% file: 10_prompts/prompt_divvybench_mixed.tex
\begin{tcolorbox}[colback=white!98!gray, colframe=black!70, enhanced, boxrule=0.5pt, arc=1pt, left=6pt, right=6pt, top=6pt, bottom=6pt, fonttitle=\bfseries\small, title=Task Prompt: DivvyBench Tabletop (Mixed)]
\vspace{-10pt}
\begin{lstlisting}[language=systemprompt, escapeinside={(*}{*)}]
There are (*\pinput{n\_balls}*) balls on a table: (*\pinput{ball\_list}*). You and (*\pinput{n\_teammates}*) teammate(s) form a (*\pinput{n\_agents}*)-robot team, and together you must collect all of them in as few steps as possible.

**Rules**
- Each step, each of you reaches for one ball **at the same time** - no turn-taking, no communication, and you cannot see anyone else's choice. You may also wait.
- Balls come in two kinds (each ball is labelled in the state below):
  - a **light** ball is single-use: if **exactly one** of you reaches for it, that robot collects it; if two or more of you reach for the same light ball you collide and nobody gets it (wasted).
  - a **heavy** ball needs all of you: it is collected only if **all of you** reach for it together; anything short of everyone (a split or a partial reach) collects nothing (wasted).
- Keep going until the table is empty.

## Who you are
You are the **(*\pinput{role}*)** of this two-robot team (agent A(*\pinput{agent\_id}*)); your teammate is the **(*\pinput{other\_role}*)** (agent A(*\pinput{other\_id}*)). You both reason in exactly the same way -- your role label is the only thing that distinguishes you.
\end{lstlisting}
\vspace{-10pt}
\end{tcolorbox}

%% file: 10_prompts/prompt_divvybench_airspace.tex
\begin{tcolorbox}[colback=white!98!gray, colframe=black!70, enhanced, boxrule=0.5pt, arc=1pt, left=6pt, right=6pt, top=6pt, bottom=6pt, fonttitle=\bfseries\small, title=Task Prompt: DivvyBench Airspace (Competitive)]
\vspace{-10pt}
\begin{lstlisting}[language=systemprompt, escapeinside={(*}{*)}]
There are (*\pinput{n\_areas}*) areas in a survey grid: (*\pinput{area\_list}*). You and (*\pinput{n\_teammates}*) teammate(s) form a (*\pinput{n\_agents}*)-drone team, and together you must survey all of them in as few steps as possible.

**Rules**
- Each step, each of you flies to one area **at the same time** - there is no turn-taking. You may also wait.
- You **cannot** communicate, and you cannot see which area anyone else will fly to this step.
- If **two or more** of you fly to the same area, you collide: none of you gets it and the step is wasted.
- An area is single-pass: if exactly one of you flies to it, that drone surveys it for the team; if two or more of you fly to the same area you collide and nobody gets it (the step is wasted). Keep going until every area is surveyed.

## Who you are
You are the **(*\pinput{role}*)** of this two-drone team (agent A(*\pinput{agent\_id}*)); your teammate is the **(*\pinput{other\_role}*)** (agent A(*\pinput{other\_id}*)). You both reason in exactly the same way -- your role label is the only thing that distinguishes you.
\end{lstlisting}
\vspace{-10pt}
\end{tcolorbox}

%% file: 10_prompts/prompt_divvybench_household.tex
\begin{tcolorbox}[colback=white!98!gray, colframe=black!70, enhanced, boxrule=0.5pt, arc=1pt, left=6pt, right=6pt, top=6pt, bottom=6pt, fonttitle=\bfseries\small, title=Task Prompt: DivvyBench Household (Competitive)]
\vspace{-10pt}
\begin{lstlisting}[language=systemprompt, escapeinside={(*}{*)}]
There are (*\pinput{n\_locations}*) locations in an apartment: (*\pinput{location\_list}*). You and (*\pinput{n\_teammates}*) teammate(s) form a (*\pinput{n\_agents}*)-robot team, and together you must search all of them in as few steps as possible.

**Rules**
- Each step, each of you goes to one location **at the same time** - there is no turn-taking. You may also wait.
- You **cannot** communicate, and you cannot see which location anyone else will go to this step.
- If **two or more** of you go to the same location, you collide: none of you gets it and the step is wasted.
- A location is single-visit: if exactly one of you goes to it, that robot searches it for the team; if two or more of you go to the same location you collide and nobody gets it (the step is wasted). Keep going until every location is searched.

## Who you are
You are the **(*\pinput{role}*)** of this two-robot team (agent A(*\pinput{agent\_id}*)); your teammate is the **(*\pinput{other\_role}*)** (agent A(*\pinput{other\_id}*)). You both reason in exactly the same way -- your role label is the only thing that distinguishes you.
\end{lstlisting}
\vspace{-10pt}
\end{tcolorbox}

%% file: 10_prompts/prompt_tos.tex
\begin{tcolorbox}[colback=white!98!gray, colframe=black!70, enhanced, boxrule=0.5pt, arc=1pt, left=6pt, right=6pt, top=6pt, bottom=6pt, fonttitle=\bfseries\small, title=Prompt Template: Theory of Scene (ToS)]
\vspace{-10pt}
\begin{lstlisting}[language=systemprompt, escapeinside={(*}{*)}]
(*\pinput{benchmark\_scaffold}*)

## Coordinating from the shared observation
Decide from the observation you and the other agents all see: reading the same scene and reasoning alike, a rule anchored to that scene leads you all to the same division of labor.

**Role.** Read the ownership: is it overlapping -- do you and the other agents all work the same targets and resources -- or is it divided, each of you already holding its own part?
  - **Overlapping** (acting alike you would contend for one target or all defer): coordinate -- settle a division of labor over the shared targets from the scene.
  - **Divided** (your role owns some stages/stations and the other agents own the rest): hold your part and execute. Take the single most useful action within your own part; do not step onto a station or task that belongs to another agent's part, even if it looks like the most useful step right now -- reading the same scene, they are already taking it, so you would only collide. If your own part has no ready step (it waits on their output), do its enabling step or start the next independent unit; wait only when nothing of yours is productive.

**Task.** Settle the division over the shared targets: read how each pending target couples you and apply its rule over the canonical order:
  1. **Joint** (succeeds only if you all act on it together): converge -- all take the same one: the first such target in that order. (Acting alone wastes it.)
  2. **Exclusive** (single-occupancy -- multiple acting on it wastes it): divide by item -- take your share by role over that order; the other agents take the complement, so you never collide.
  3. **Sequential** (ordered hand-off chain): divide by stage -- work the stage your role owns, by the chain order.

## Reply format
Reply with a single JSON object and nothing else:

{
  "state":   "<from the observation only: what you hold + which steps are doable now (inputs present); world state, never the other agents' intent>",
  "role":    "<is ownership overlapping (you and the other agents all work the same targets/resources -> coordinate) or divided (your role owns some stages/stations, the others own the rest -> execute your own), and why>",
  "task":    "<for each pending shared target, how it couples you -- Joint / Exclusive / Sequential>",
  "plan":    "<overlapping: apply the matching rule per target over the canonical order -- Joint: converge on the first such target; Exclusive: your role's share; Sequential: your stage; divided: the most useful step within your own part (enabling step or next unit if none ready), not a step that belongs to another agent's part>",
  "action":  "<copy one action verbatim from your currently-legal actions>"
}
\end{lstlisting}
\vspace{-10pt}
\end{tcolorbox}
  

%% file: 10_prompts/trace_tos_divvybench.tex
\begin{tcolorbox}[colback=white!98!gray, colframe=black!70, enhanced, boxrule=0.5pt, arc=1pt, left=6pt, right=6pt, top=6pt, bottom=6pt, fonttitle=\bfseries\small, title=ToS Reasoning on DivvyBench Tabletop (Mixed)]
\scriptsize

\textbf{Context.} Agents: 2 \quad Balls: 8, 4 heavy and 4 light \quad Rule: a heavy ball needs both agents at once, a light ball needs exactly one and collides if both take it

\medskip\hrule\medskip

\textit{Step 1.} On the table: red (\textit{heavy}), blue (\textit{heavy}), green (\textit{heavy}), yellow (\textit{heavy}), orange (\textit{light}), purple (\textit{light}), pink (\textit{light}), brown (\textit{light}).

\begin{minipage}[t]{0.478\linewidth}
\textbf{A0} $\rightarrow$ \texttt{pick\_red}
\begin{itemize}[leftmargin=10pt, noitemsep, topsep=1pt, partopsep=0pt, parsep=0pt]
    \item \texttt{role}: \reading{Overlapping}. Both agents have access to the same set of targets and resources; there are no distinct stages or stations assigned to specific roles. Therefore, we must [...]
    \item \texttt{task}: \reading{Joint}: red, blue, green, yellow (require both agents to pick the same ball). \reading{Exclusive}: orange, purple, pink, brown (require exactly one agent to pick; if both pick, [...]
    \item \texttt{plan}: Apply the canonical order rule. For Joint targets, converge on the first one in the list (red). For Exclusive targets, split them by role (Leader takes the first [...]
\end{itemize}
\end{minipage}\hfill
\begin{minipage}[t]{0.478\linewidth}
\textbf{A1} $\rightarrow$ \texttt{pick\_red}
\begin{itemize}[leftmargin=10pt, noitemsep, topsep=1pt, partopsep=0pt, parsep=0pt]
    \item \texttt{role}: \reading{Overlapping}. Both agents see the same state, have the same capabilities, and no role-specific stations or stages exist. Therefore, they must coordinate to avoid [...]
    \item \texttt{task}: \reading{Joint}: The four heavy balls (red, blue, green, yellow) require both agents to pick the same one simultaneously to succeed. \reading{Exclusive}: The four light balls (orange, [...]
    \item \texttt{plan}: Apply the coordination rule: 1. Joint targets: Both agents must converge on the first heavy ball in the canonical order (red). 2. Exclusive targets: Since we are [...]
\end{itemize}
\end{minipage}

\medskip\hrule\medskip

\textit{Step 5.} On the table: orange (\textit{light}), purple (\textit{light}), pink (\textit{light}), brown (\textit{light}).

\begin{minipage}[t]{0.478\linewidth}
\textbf{A0} $\rightarrow$ \texttt{pick\_orange}
\begin{itemize}[leftmargin=10pt, noitemsep, topsep=1pt, partopsep=0pt, parsep=0pt]
    \item \texttt{role}: \reading{Overlapping}. Both agents are identical in capability and have access to the same set of remaining targets (the four light balls). There are no distinct stages or [...]
    \item \texttt{task}: All remaining targets (orange, purple, pink, brown) are \reading{Exclusive} (light balls). If both agents pick the same one, they collide and fail; if they pick different ones, [...]
    \item \texttt{plan}: The agents must divide the Exclusive targets to avoid collision. Using the canonical order (orange, purple, pink, brown) and a deterministic role-based split (e.g., [...]
\end{itemize}
\end{minipage}\hfill
\begin{minipage}[t]{0.478\linewidth}
\textbf{A1} $\rightarrow$ \texttt{pick\_purple}
\begin{itemize}[leftmargin=10pt, noitemsep, topsep=1pt, partopsep=0pt, parsep=0pt]
    \item \texttt{role}: \reading{Overlapping}
    \item \texttt{task}: \reading{Exclusive}: The remaining targets are all light (single-use) balls. Since both robots are identical and overlapping, acting on the same ball causes a collision (waste). [...]
    \item \texttt{plan}: As the follower (A1) and the leader (A0) share the same reasoning and canonical order, we must split the Exclusive targets. A standard coordination rule for overlapping [...]
\end{itemize}
\end{minipage}

\end{tcolorbox}

%% file: 10_prompts/trace_tos_govsim.tex
\begin{tcolorbox}[colback=white!98!gray, colframe=black!70, enhanced, boxrule=0.5pt, arc=1pt, left=6pt, right=6pt, top=6pt, bottom=6pt, fonttitle=\bfseries\small, title=ToS Reasoning on GovSim Fishery]
\scriptsize

\textbf{Context.} Agents: 5 \quad Lake: 100 tons \quad Regrowth: survivors double, capped at 100 \quad Horizon: 12 months

\medskip\hrule\medskip

\textit{Month 1 of 12.} Lake at 100 tons.

\textbf{A0} $\rightarrow$ \texttt{harvest 10}
\begin{itemize}[leftmargin=15pt, noitemsep, topsep=1pt, partopsep=0pt, parsep=0pt]
    \item \texttt{role}: \reading{Overlapping}; all 5 agents are fishermen with identical capabilities and access to the same shared resource (the lake), contending for the same fish stock.
    \item \texttt{task}: \reading{Exclusive}; the fish in the lake are a shared resource where multiple agents acting on it (catching) creates competition. To avoid overfishing and ensure sustainability for long-term income, the agents must divide the catch among themselves.
    \item \texttt{plan}: Since we are overlapping and the resource is Exclusive (single-occupancy per unit of fish, but shared by the group), we must coordinate a division of labor to maximize long-term yield. The optimal sustainable strategy for a logistic growth model with carrying capacity K=100 and growth rate r=1 (doubling) is to harvest at the Maximum [...]
\end{itemize}

\textbf{A1} $\rightarrow$ \texttt{harvest 10}\\
\textbf{A2} $\rightarrow$ \texttt{harvest 10}\\
\textbf{A3} $\rightarrow$ \texttt{harvest 10}\\
\textbf{A4} $\rightarrow$ \texttt{harvest 10}

\end{tcolorbox}

%% file: 10_prompts/trace_tos_overcooked.tex
\begin{tcolorbox}[colback=white!98!gray, colframe=black!70, enhanced, boxrule=0.5pt, arc=1pt, left=6pt, right=6pt, top=6pt, bottom=6pt, fonttitle=\bfseries\small, title=ToS Reasoning on Overcooked]
\scriptsize

\textbf{Context.} Agents: 2 \quad Dish: baked bell pepper soup (level 3) \quad Recipe: bell pepper $\to$ cut $\to$ bake $\to$ cook $\to$ plate $\to$ deliver \quad Stations: A0 pot (cook) and oven (bake), A1 chopping board (cut) and blender (stir)

\medskip\hrule\medskip

\textit{Step 1.} Kitchen empty, both agents idle.

\begin{minipage}[t]{0.478\linewidth}
\textbf{A0} $\rightarrow$ \texttt{pickup(dish,dish\_dispenser)}
\begin{itemize}[leftmargin=10pt, noitemsep, topsep=1pt, partopsep=0pt, parsep=0pt]
    \item \texttt{role}: \reading{Divided}. Chef owns cooking stations (pot, oven) and serving, while Assistant owns prep stations (ingredient dispenser, chopping board, blender). Since the first step [...]
    \item \texttt{task}: \reading{Sequential}. The process is a chain: Ingredient Pickup $\to$ Cut $\to$ Bake $\to$ Cook $\to$ Serve. The Chef's stage is the second half (Bake, Cook, Serve). The Assistant's stage is [...]
    \item \texttt{plan}: Divided roles: Assistant handles ingredient pickup and chopping. Chef handles baking, cooking, and serving. Since the Assistant must start the process by picking up the [...]
\end{itemize}
\end{minipage}\hfill
\begin{minipage}[t]{0.478\linewidth}
\textbf{A1} $\rightarrow$ \texttt{pickup(bell\_pepper,ingredient\_dispenser)}
\begin{itemize}[leftmargin=10pt, noitemsep, topsep=1pt, partopsep=0pt, parsep=0pt]
    \item \texttt{role}: \reading{Divided}
    \item \texttt{task}: \reading{Sequential}
    \item \texttt{plan}: My role is the Assistant, owning the prep stage (ingredient dispenser, chopping board). The Chef owns the cooking stage (oven, pot). The workflow is sequential: I must [...]
\end{itemize}
\end{minipage}

\medskip\hrule\medskip

\textit{Step 2.} A0 holds a dish, A1 holds a bell pepper, all stations empty.

\begin{minipage}[t]{0.478\linewidth}
\textbf{A0} $\rightarrow$ \texttt{wait(1)}
\begin{itemize}[leftmargin=10pt, noitemsep, topsep=1pt, partopsep=0pt, parsep=0pt]
    \item \texttt{role}: \reading{Divided}. The Chef is responsible for cooking stations (oven, pot) and serving, while the Assistant handles prep (chopping board) and fetching. The Chef cannot cut [...]
    \item \texttt{task}: \reading{Sequential}. The workflow requires: 1. Assistant cuts pepper $\to$ 2. Hand off slices $\to$ 3. Chef bakes $\to$ 4. Hand off baked slices $\to$ 5. Chef cooks $\to$ 6. Chef plates. The [...]
    \item \texttt{plan}: The Assistant currently holds the raw bell pepper and needs to cut it. The Chef holds a dish (prepared for serving) but cannot cook anything yet because the ingredients [...]
\end{itemize}
\end{minipage}\hfill
\begin{minipage}[t]{0.478\linewidth}
\textbf{A1} $\rightarrow$ \texttt{put\_obj\_in\_utensil(chopping\_board0)}
\begin{itemize}[leftmargin=10pt, noitemsep, topsep=1pt, partopsep=0pt, parsep=0pt]
    \item \texttt{role}: \reading{Divided}. The Assistant owns the prep stage (ingredient dispenser, chopping\_board, blender) and the Chef owns the cooking stage (oven, pot) and serving. The workflow is [...]
    \item \texttt{task}: \reading{Sequential}. The target is the 'Baked Bell Pepper Soup' order. The chain is: 1. Assistant cuts bell\_pepper (my stage). 2. Chef bakes slices (their stage). 3. Chef cooks [...]
    \item \texttt{plan}: I have the bell\_pepper in hand and the chopping\_board0 is empty. According to the recipe, I must cut the bell pepper. The legal action to start this process is to place [...]
\end{itemize}
\end{minipage}

\end{tcolorbox}

%% file: references.bib
@InProceedings{goel2025great,
  title={{Great Models Think Alike and this Undermines {AI} Oversight}},
  author={Goel, Shashwat and Str\"{u}ber, Joschka and Auzina, Ilze Amanda and Chandra, Karuna K and Kumaraguru, Ponnurangam and Kiela, Douwe and Prabhu, Ameya and Bethge, Matthias and Geiping, Jonas},
  booktitle={International Conference on Machine Learning},
  pages={19621--19678},
  year={2025}
}

@inproceedings{kim2025correlated,
  title={{Correlated Errors in Large Language Models}},
  author={Kim, Elliot and Garg, Avi and Peng, Kenny and Garg, Nikhil},
  booktitle={International Conference on Machine Learning},
  pages={30038--30066},
  year={2025}
}

@article{jiang2025artificial,
  title={{Artificial Hivemind: The Open-Ended Homogeneity of Language Models (and Beyond)}},
  author={Jiang, Liwei and Chai, Yuanjun and Li, Margaret and Liu, Mickel and Fok, Raymond and Dziri, Nouha and Tsvetkov, Yulia and Sap, Maarten and Choi, Yejin},
  journal={Advances in Neural Information Processing Systems},
  volume={38},
  year={2025}
}

@article{yang2025agentnet,
  title={{AgentNet: Decentralized Evolutionary Coordination for LLM-based Multi-Agent Systems}},
  author={Yang, Yingxuan and Chai, Huacan and Shao, Shuai and Song, Yuanyi and Qi, Siyuan and Rui, Renting and Zhang, Weinan},
  journal={Advances in Neural Information Processing Systems},
  volume={38},
  pages={107309--107336},
  year={2025}
}

@inproceedings{mu2026adaptive,
  title={{Adaptive Theory of Mind for LLM-based Multi-Agent Coordination}},
  author={Mu, Chunjiang and Zeng, Ya and Zhang, Qiaosheng and Shao, Kun and Chu, Chen and Guo, Hao and Jia, Danyang and Wang, Zhen and Hu, Shuyue},
  booktitle={Proceedings of the AAAI Conference on Artificial Intelligence},
  volume={40},
  number={35},
  pages={29608--29616},
  year={2026}
}

@article{kim2026teambench,
  title={{TeamBench: Evaluating Agent Coordination under Enforced Role Separation}},
  author={Kim, Yubin and Park, Chanwoo and Kim, Taehan and Park, Eugene and Schmidgall, Samuel and Rahman, Salman and Park, Chunjong and Breazeal, Cynthia and Liu, Xin and Palangi, Hamid and others},
  journal={arXiv preprint arXiv:2605.07073},
  year={2026}
}

@article{tessera2026benchmarking,
  title={{Benchmarking Open-Ended Multi-Agent Coordination in Language Agents}},
  author={Tessera, Kale-ab Abebe and Szecsenyi, Andras and Barker, Cameron and Rutherford, Alexander and Paglieri, Davide and Scannell, Aidan and Gouk, Henry and Crowley, Elliot J and Rockt{\"a}schel, Tim and Storkey, Amos},
  journal={arXiv preprint arXiv:2606.08340},
  year={2026}
}

@inproceedings{wu2024autogen,
  title={{AutoGen: Enabling Next-Gen LLM Applications via Multi-Agent Conversation}},
  author={Wu, Qingyun and Bansal, Gagan and Zhang, Jieyu and Wu, Yiran and Li, Beibin and Zhu, Erkang and Jiang, Li and Zhang, Xiaoyun and Zhang, Shaokun and Liu, Jiale and others},
  booktitle={First Conference on Language Modeling},
  year={2024}
}

@article{li2023camel,
  title={{CAMEL: Communicative Agents for "Mind" Exploration of Large Language Model Society}},
  author={Li, Guohao and Hammoud, Hasan and Itani, Hani and Khizbullin, Dmitrii and Ghanem, Bernard},
  journal={Advances in Neural Information Processing Systems},
  volume={36},
  pages={51991--52008},
  year={2023}
}

@inproceedings{hong2024metagpt,
  title={{MetaGPT: Meta Programming for A Multi-Agent Collaborative Framework}},
  author={Hong, Sirui and Zhuge, Mingchen and Chen, Jonathan and Zheng, Xiawu and Cheng, Yuheng and Wang, Jinlin and Zhang, Ceyao and Yau, Steven and Lin, Zijuan and Zhou, Liyang and others},
  booktitle={International Conference on Learning Representations},
  pages={23247--23275},
  year={2024}
}

@inproceedings{zhang2024building,
  title={{Building Cooperative Embodied Agents Modularly with Large Language Models}},
  author={Zhang, Hongxin and Du, Weihua and Shan, Jiaming and Zhou, Qinhong and Du, Yilun and Tenenbaum, Joshua B and Shu, Tianmin and Gan, Chuang},
  booktitle={International Conference on Learning Representations},
  pages={19373--19401},
  year={2024}
}

@inproceedings{agashe2025llm,
  title={{LLM-Coordination: Evaluating and Analyzing Multi-agent Coordination Abilities in Large Language Models}},
  author={Agashe, Saaket and Fan, Yue and Reyna, Anthony and Wang, Xin Eric},
  booktitle={Findings of the Association for Computational Linguistics: NAACL 2025},
  pages={8038--8057},
  year={2025}
}

@inproceedings{cross2025hypothetical,
  title={{Hypothetical Minds: Scaffolding Theory of Mind for Multi-Agent Tasks with Large Language Models}},
  author={Cross, Logan and Xiang, Violet and Bhatia, Agam and Yamins, Daniel and Haber, Nick},
  booktitle={International Conference on Learning Representations},
  pages={6507--6546},
  year={2025}
}

@inproceedings{zhang2024proagent,
  title={{ProAgent: Building Proactive Cooperative Agents with Large Language Models}},
  author={Zhang, Ceyao and Yang, Kaijie and Hu, Siyi and Wang, Zihao and Li, Guanghe and Sun, Yihang and Zhang, Cheng and Zhang, Zhaowei and Liu, Anji and Zhu, Song-Chun and others},
  booktitle={Proceedings of the AAAI Conference on Artificial Intelligence},
  volume={38},
  number={16},
  pages={17591--17599},
  year={2024}
}

@article{carroll2019utility,
  title={{On the Utility of Learning about Humans for Human-AI Coordination}},
  author={Carroll, Micah and Shah, Rohin and Ho, Mark K and Griffiths, Tom and Seshia, Sanjit and Abbeel, Pieter and Dragan, Anca},
  journal={Advances in Neural Information Processing Systems},
  volume={32},
  year={2019}
}

@inproceedings{sun2025collab,
  title={{Collab-Overcooked: Benchmarking and Evaluating Large Language Models as Collaborative Agents}},
  author={Sun, Haochen and Zhang, Shuwen and Niu, Lujie and Ren, Lei and Xu, Hao and Fu, Hao and Zhao, Fangkun and Yuan, Caixia and Wang, Xiaojie},
  booktitle={Proceedings of the 2025 Conference on Empirical Methods in Natural Language Processing},
  pages={4922--4951},
  year={2025}
}

@inproceedings{qian2024chatdev,
  title={{ChatDev: Communicative Agents for Software Development}},
  author={Qian, Chen and Liu, Wei and Liu, Hongzhang and Chen, Nuo and Dang, Yufan and Li, Jiahao and Yang, Cheng and Chen, Weize and Su, Yusheng and Cong, Xin and others},
  booktitle={Proceedings of the 62nd Annual Meeting of the Association for Computational Linguistics (Volume 1: Long Papers)},
  pages={15174--15186},
  year={2024}
}

@inproceedings{li2023theory,
  title={{Theory of Mind for Multi-Agent Collaboration via Large Language Models}},
  author={Li, Huao and Chong, Yu and Stepputtis, Simon and Campbell, Joseph P and Hughes, Dana and Lewis, Charles and Sycara, Katia},
  booktitle={Proceedings of the 2023 Conference on Empirical Methods in Natural Language Processing},
  pages={180--192},
  year={2023}
}

@inproceedings{hu2020other,
  title={{"Other-Play" for Zero-Shot Coordination}},
  author={Hu, Hengyuan and Lerer, Adam and Peysakhovich, Alex and Foerster, Jakob},
  booktitle={International Conference on Machine Learning},
  pages={4399--4410},
  year={2020},
  organization={PMLR}
}

@inproceedings{yao2023react,
  title={{ReAct: Synergizing Reasoning and Acting in Language Models}},
  author={Yao, Shunyu and Zhao, Jeffrey and Yu, Dian and Du, Nan and Shafran, Izhak and Narasimhan, Karthik and Cao, Yuan},
  booktitle={International Conference on Learning Representations},
  year={2023}
}

@inproceedings{hayler2026zero,
  title={{Zero-Shot Coordination among LLM Agents}},
  author={Hayler, Adrian and Chirra, Shashank Reddy and Lupu, Andrei and Forkel, Johannes and Sarkar, Bidipta and Feng, Siheng and Foerster, Jakob Nicolaus},
  booktitle={Workshop on Multi-Agent Learning and Its Opportunities in the Era of Generative AI},
  year={2026}
}

@article{chopra2025ripple,
  title={{Ripple Effect Protocol: Coordinating Agent Populations}},
  author={Chopra, Ayush and Sharma, Aman and Ahmad, Feroz and Muscariello, Luca and Pandey, Vijoy and Raskar, Ramesh},
  journal={arXiv preprint arXiv:2510.16572},
  year={2025}
}

@article{grotschla2025agentsnet,
  title={{AgentsNet: Coordination and Collaborative Reasoning in Multi-Agent LLMs}},
  author={Gr{\"o}tschla, Florian and M{\"u}ller, Luis and T{\"o}nshoff, Jan and Galkin, Mikhail and Perozzi, Bryan},
  journal={arXiv preprint arXiv:2507.08616},
  year={2025}
}

@inproceedings{wang2022tom2c,
  title={{ToM2C: Target-oriented Multi-agent Communication and Cooperation with Theory of Mind}},
  author={Wang, Yuanfei and Zhong, Fangwei and Xu, Jing and Wang, Yizhou},
  booktitle={International Conference on Learning Representations},
  year={2022}
}

@article{jian2026gated,
  title={{Gated Coordination for Efficient Multi-Agent Collaboration in Minecraft Game}},
  author={Jian, HuaDong and Li, Chenghao and Wang, Haoyu and Shuai, Jiajia and Guo, Jinyu and Yang, Yang and Zhang, Chaoning},
  journal={arXiv preprint arXiv:2604.18975},
  year={2026}
}

@article{piatti2024cooperate,
  title={{Cooperate or Collapse: Emergence of Sustainable Cooperation in a Society of LLM Agents}},
  author={Piatti, Giorgio and Jin, Zhijing and Kleiman-Weiner, Max and Sch{\"o}lkopf, Bernhard and Sachan, Mrinmaya and Mihalcea, Rada},
  journal={Advances in Neural Information Processing Systems},
  volume={37},
  pages={111715--111759},
  year={2024}
}

@misc{qwen3.5,
  title= {{Qwen3.5: Towards Native Multimodal Agents}},
  author={{Qwen Team}},
  year={2026},
  month={February},
  url={https://qwen.ai/blog?id=qwen3.5}
}

@article{lupu2025decrypto,
  title={{The Decrypto Benchmark for Multi-Agent Reasoning and Theory of Mind}},
  author={Lupu, Andrei and Willi, Timon and Foerster, Jakob},
  journal={arXiv preprint arXiv:2506.20664},
  year={2025}
}

@article{li2025systematic,
  title={{Systematic Failures in Collective Reasoning under Distributed Information in Multi-Agent LLMs}},
  author={Li, Yuxuan and Naito, Aoi and Shirado, Hirokazu},
  journal={arXiv preprint arXiv:2505.11556},
  year={2025}
}

@article{abdelnabi2024cooperation,
  title={{Cooperation, Competition, and Maliciousness: LLM-Stakeholders Interactive Negotiation}},
  author={Abdelnabi, Sahar and Gomaa, Amr and Sivaprasad, Sarath and Sch{\"o}nherr, Lea and Fritz, Mario},
  journal={Advances in Neural Information Processing Systems},
  volume={37},
  pages={83548--83599},
  year={2024}
}

@inproceedings{qian2026collabbench,
  title={{CollabBench: Benchmarking and Unleashing Collaborative Ability of LLMs with Diverse Players via Proactive Engagement}},
  author={Qian, Hong and Liu, Yuanhao and Zhou, Zihan and Zhang, Zongbao and Ge, Hanjie and Shi, Haotian and Dou, Liang and Wang, Xiangfeng and Yang, Jing-Wen and Zhou, Aimin},
  booktitle={International Conference on Machine Learning},
  year={2026}
}

@inproceedings{du2024improving,
  title={{Improving Factuality and Reasoning in Language Models through Multiagent Debate}},
  author={Du, Yilun and Li, Shuang and Torralba, Antonio and Tenenbaum, Joshua B and Mordatch, Igor},
  booktitle={International Conference on Machine Learning},
  pages={11733--11763},
  year={2024},
  organization={PMLR}
}

@article{duan2024gtbench,
  title={{GTBench: Uncovering the Strategic Reasoning Capabilities of LLMs via Game-Theoretic Evaluations}},
  author={Duan, Jinhao and Zhang, Renming and Diffenderfer, James and Kailkhura, Bhavya and Sun, Lichao and Stengel-Eskin, Elias and Bansal, Mohit and Chen, Tianlong and Xu, Kaidi},
  journal={Advances in Neural Information Processing Systems},
  volume={37},
  pages={28219--28253},
  year={2024}
}

@book{schelling1980strategy,
  title={{The Strategy of Conflict: with a new Preface by the Author}},
  author={Schelling, Thomas C},
  year={1980},
  publisher={Harvard University Press}
}

@article{aharon2026tacit,
  title={{Tacit Coordination of Large Language Models}},
  author={Aharon, Ido and La Malfa, Emanuele and Wooldridge, Michael and Kraus, Sarit},
  journal={arXiv preprint arXiv:2601.22184},
  year={2026}
}

@inproceedings{riedl2026emergent,
  title={{Emergent Coordination in Multi-Agent Language Models}},
  author={Riedl, Christoph},
  booktitle={International Conference on Learning Representations},
  pages={120776--120799},
  year={2026}
}

@article{ashery2025emergent,
  title={{Emergent Social Conventions and Collective Bias in LLM Populations}},
  author={Ashery, Ariel Flint and Aiello, Luca Maria and Baronchelli, Andrea},
  journal={Science Advances},
  volume={11},
  number={20},
  pages={eadu9368},
  year={2025},
  publisher={American Association for the Advancement of Science}
}

@inproceedings{angluin1980local,
  title={{Local and Global Properties in Networks of Processors}},
  author={Angluin, Dana},
  booktitle={Proceedings of the twelfth annual ACM symposium on Theory of computing},
  pages={82--93},
  year={1980}
}

@article{patil2026randomness,
  title={{Randomness is Sometimes Necessary for Coordination}},
  author={Patil, Rohan and Malegaonkar, Jai and Christensen, Henrik I},
  journal={arXiv preprint arXiv:2605.06825},
  year={2026}
}

@article{agapiou2022melting,
  title={{Melting Pot 2.0}},
  author={Agapiou, John P and Vezhnevets, Alexander Sasha and Du{\'e}{\~n}ez-Guzm{\'a}n, Edgar A and Matyas, Jayd and Mao, Yiran and Sunehag, Peter and K{\"o}ster, Raphael and Madhushani, Udari and Kopparapu, Kavya and Comanescu, Ramona and others},
  journal={arXiv preprint arXiv:2211.13746},
  year={2022}
}

@article{papoudakis2021benchmarking,
  title={{Benchmarking Multi-Agent Deep Reinforcement Learning Algorithms in Cooperative Tasks}},
  author={Papoudakis, Georgios and Christianos, Filippos and Sch{\"a}fer, Lukas and Albrecht, Stefano V},
  journal={Proceedings of the Neural Information Processing Systems Track on Datasets and Benchmarks},
  year={2021}
}

@article{wu2021too,
  title={{Too many cooks: Bayesian inference for coordinating multi-agent collaboration}},
  author={Wu, Sarah A and Wang, Rose E and Evans, James A and Tenenbaum, Joshua B and Parkes, David C and Kleiman-Weiner, Max},
  journal={Topics in Cognitive Science},
  volume={13},
  number={2},
  pages={414--432},
  year={2021},
  publisher={Wiley Online Library}
}

@inproceedings{hu2021off,
  title={{Off-Belief Learning}},
  author={Hu, Hengyuan and Lerer, Adam and Cui, Brandon and Pineda, Luis and Brown, Noam and Foerster, Jakob},
  booktitle={International Conference on Machine Learning},
  pages={4369--4379},
  year={2021},
  organization={PMLR}
}
